\newcommand{\norm}[1]{\left\lVert#1\right\rVert}
\newcommand{\abs}[1]{\left|#1\right|}
\newcommand{\bracket}[1]{\left(#1\right)}
\newcommand{\bfx}{\mathbf{x}}
\newcommand{\bfw}{\mathbf{w}}
\newcommand{\bfW}{\mathbf{W}}
\newcommand{\bfA}{\mathbf{A}}
\newcommand{\bfD}{\mathbf{D}}
\newcommand{\bfL}{\mathbf{L}}
\newcommand{\bfU}{\mathbf{U}}
\newcommand{\bfr}{\mathbf{r}}
\newcommand{\bfJ}{\mathbf{J}}
\newcommand{\bfH}{\mathbf{H}}
\newcommand{\bfG}{\mathbf{G}}
\newcommand{\bfI}{\mathbf{I}}
\newcommand{\bff}{\mathbf{f}}
\newcommand{\bfP}{\mathbf{P}}
\newcommand{\bfQ}{\mathbf{Q}}
\newcommand{\bfy}{\mathbf{y}}
\newcommand{\bfa}{\mathbf{a}}
\newcommand{\bbR}{\mathbb{R}}
\newcommand{\bfK}{\mathbf{K}}
\newcommand{\vc}{\operatorname{vec}}
\DeclareMathOperator*{\argmin}{argmin}
\theoremstyle{plain}
\newtheorem{thm}{Theorem}
\newtheorem{lem}{Lemma}
\newtheorem{asmp}{Assumption}
\theoremstyle{definition}
\title{Gram-Gauss-Newton Method: Learning Overparameterized Neural Networks for Regression Problems}
\author{
    Tianle Cai\thanks{equal contribution}, Ruiqi Gao\footnotemark[1], Jikai Hou\footnotemark[1], Siyu Chen, Dong Wang, Di He\\
    \texttt{\{caitianle1998, grq, 1600010681, siyuchen, wangdongcis, di\_he\}@pku.edu.cn}\\
    Peking University\\
    \and
    Zhihua Zhang\\
    \texttt{zhzhang@math.pku.edu.cn}\\
    Peking University\\
    \and
    Liwei Wang\\
    \texttt{wanglw@cis.pku.edu.cn}\\
    Peking University\\
}
\begin{document}

\maketitle

\begin{abstract}
   First-order methods such as stochastic gradient descent (SGD) are currently the standard algorithm for training deep neural networks. Second-order methods, despite their better convergence rate, are rarely used in practice due to the prohibitive computational cost in calculating the second-order information. In this paper, we propose a novel Gram-Gauss-Newton (GGN) algorithm to train deep neural networks for regression problems with square loss. Our method draws inspiration from the connection between neural network optimization and kernel regression of neural tangent kernel (NTK).
   Different from typical second-order methods that have heavy computational cost in each iteration, GGN only has minor overhead compared to first-order methods such as SGD. We also give theoretical results to show that for sufficiently wide neural networks, the convergence rate of GGN is \emph{quadratic}. Furthermore, we provide convergence guarantee for mini-batch GGN algorithm, which is, to our knowledge, the first convergence result for the mini-batch version of a second-order method on overparameterized neural networks. Preliminary experiments on regression tasks demonstrate that for training standard networks, our GGN algorithm converges much faster and achieves better performance than SGD.
   \end{abstract}
   
   \section{Introduction}
   \label{sec:intro}

   First-order methods such as Stochastic Gradient Descent (SGD) are currently the standard choice for training deep neural networks. The merit of first-order methods is obvious: they only calculate the gradient and therefore are computationally efficient. In addition to better computational efficiency, SGD has even more advantages among the first-order methods. At each iteration, SGD computes the gradient only on a mini-batch instead of all training data. Such randomness introduced by sampling the mini-batch can lead to better generalization~\citep{hardt2015train,keskar2016large, masters2018revisiting,mou2017generalization,zhu2018anisotropic} and better convergence~\citep{ge2015escaping,jin2017escape,jin2017accelerated}, which is crucial when the function class is highly overparameterized deep neural networks. Recently there is a huge body of works trying to develop more efficient first-order methods beyond SGD \citep{duchi2011adaptive,kingma2014adam,luo2019adaptive,liu2019variance}.
   
   Second-order methods, despite their better convergence rate, are rarely used to train deep neural networks. At each iteration, the algorithm has to compute second order information, for example, the Hessian or its approximation, which is typically an $m$ by $m$ matrix where $m$ is the number of parameters of the neural network. Moreover, the algorithm needs to compute the inverse of this matrix. The computational cost is prohibitive and usually it is not even possible to store such a matrix.
   
   Recently, there is a series of works~\citep{du2018gradient,du2018gradient_deep,zou2018stochastic,allen2018learning,oymak2018overparameterized,arora2019fine,arora2019exact,cao2019generalization,zou2019improved} considering the optimization of neural networks based on the idea of neural tangent kernel (NTK)~\citep{jacot2018neural}. Roughly speaking, the idea of NTK is to linearly approximate the output of a network w.r.t. the parameters at a local region, thus resulting in a kernel feature map, which is the gradient of the output w.r.t. the parameters. \cite{jacot2018neural} shows that when the width of the network tends to infinity, the NTK tends to be unchanged during gradient flow since the network only needs $o(1)$ change (goes to zero as the width tends to infinity) of parameter to fit the data, so the change of kernel is also $o(1)$. As for finite-width networks, however, the NTK changes and previous works~\citep{allen2018convergencetheory, arora2019exact} show that for sufficiently wide neural networks the optimization dynamic of GD/SGD is equivalent to that of using GD/SGD to solve an NTK kernel regression problem where the kernel is slowly evolving (see Lemma~\ref{lem:dynamics} in Section~\ref{sec:classic} for a precise description). A natural question then arises:
   \begin{center}
       \textit{Can we gain acceleration by directly solving kernel regression w.r.t. the NTK at each step?}
   \end{center}
   In this paper, we give a positive answer to this question and reveal the  connection between NTK regression and Gauss-Newton method. We propose a novel optimization method -- the Gram-Gauss-Newton (GGN) method. Instead of doing gradient descent, GGN solves the kernel regression w.r.t. the neural tangent kernel at each step of the optimization. Following this idea, we theoretically prove that for overparameterized networks, GGN enjoys quadratic convergence compared to the linear rate of gradient descent.
   
   Besides theoretical fast convergence rate, GGN is also very efficient in practice. In fact, GGN is implicitly a reformulation of the Gauss-Newton method (see Section~\ref{sec:GGN} for details) which is a classic second-order algorithm often used for solving nonlinear regression problems with square loss. In the Gauss-Newton method, one uses $\bfJ^{\top} \bfJ$ as an approximation of the Hessian (see Section~\ref{sec:classic} for a formal description) where $\bfJ$ is the Jacobian matrix. However, the original Gauss-Newton method faces challenges when used for training deep neural networks. Most seriously, the size of the approximate Hessian $\bfJ^{\top} \bfJ$ is $m$ by $m$, where $m$ is the number of parameters of the neural network. Moreover, for overparameterized neural networks, $\bfJ^{\top} \bfJ$ is not invertible, which may make the algorithm intractable for training commonly-used neural networks. %
   
   GGN bypasses the difficulty stated above as follows. Instead of using $\bfJ^\top\bfJ$ as approximate Hessian and applying Newton-type method, each step of GGN only involves the Gram matrix $\bfJ \bfJ^{\top}$ whose size is $n$ by $n$ where $n$ is the number of data. Furthermore, as already mentioned, to get better generalization performance, it is crucial to use mini-batch to introduce sampling noise when calculating derivatives. Therefore, like SGD, we also use mini-batch in GGN. In this case, the size of the Gram matrix further reduces to $b$ by $b$, where $b$ is the batch size. Though conventional wisdom may suggest that applying mini-batch scheme to second-order methods will introduce biased estimation of the accelerated gradient direction, we give the first convergence result for mini-batch second-order method on overparameterized networks. Regarding computational complexity, we show that at each iteration, the overhead of GGN is small compared to SGD: the extra computation of GGN is mainly the matrix product $\bfJ \bfJ^{\top}$ and the inverse of this matrix whose size is small for a mini-batch. Detailed analyses can be found in Section~\ref{sec:analysis}. We next conduct experiments on two regression tasks to study the effectiveness of the GGN algorithm. We demonstrate that in these two real applications, using a practical neural network (e.g., ResNet-32) with standard width, our proposed GGN algorithm can converge faster and achieve better performance than several baseline algorithms.

   \subsection{Related Works}
   Despite the prevalence of first-order methods for training deep neural networks, there have been continuing efforts in developing practical second-order methods~\citep{becker1988improving,pascanu2013revisiting}. We summarize some of these works below.
   
   The main approach for these methods is to develop delicate approximations of the second-order information matrix so that the update direction can be computed as efficiently as possible. For example, \cite{botev2017practical} proposed a recursive block-diagonal approximation of the Hessian. The blocks are Kronecker factored and can be efficiently computed and inverted. Grosse and Martens in a series of works developed the K-FAC method \citep{martens2015optimizing, grosse2016kronecker}. The key idea is a Kronecker-factored approximation of the Fisher information matrix, which is used as the second-order matrix in natural gradient methods. These works received considerable attention and have been further improved \citep{wu2017scalable,george2018fast,martens2018kronecker}. \cite{bernacchia2018exact} derived an exact expression of the natural gradient update, but only works for linear networks. Different from all these works, our GGN algorithm does not try to approximate the second-order matrix whose size is inevitably huge. Instead, we present an easy-to-compute solution of the updating direction, reducing the computational cost significantly. One exceptional concurrent work~\cite{ren2019efficient} also aims to use the exact Gauss-Newton update. They focus on reducing the complexity of inverting approximate Hessian by Sherman-Morrison-Woodbury Formula and require subtle implementation tricks to use backpropagation. In contrast, GGN has simpler update rule and better guarantee for neural networks.

   In a concurrent and independent work, \cite{zhang2019fast} showed that natural gradient method and K-FAC have a linear convergence rate for sufficiently wide networks in full-batch setting and gave a generalization result using the tools in \cite{arora2019fine}. In contrast, our method enjoys a higher-order (quadratic) convergence rate guarantee for overparameterized networks, and we focus on developing a practical and theoretically sound optimization method. Unlike the methods considered in \cite{zhang2019fast} which contain the learning rate that needed to be tuned, our derivation of GGN does not introduce a learning rate term (or understood as suggesting that the learning rate can be fixed to be $1$ to get good performance which is verified in Figure~\ref{fig:hyper-parameters} (c)). We also reveal the relation between our method and NTK kernel regression, so using results based on NTK \citep{arora2019fine}, one can easily give generalization guarantee of our method.

   \section{Neural Tangent Kernel and the Classic Gauss-Newton Method for Nonlinear Least Squares Regression}
   \label{sec:classic}
   Nonlinear least squares regression problem is a general machine learning problem. Given data pairs $\{\bfx_i, y_i\}_{i=1}^n$ and a class of nonlinear functions $f$, e.g. neural networks, parameterized by $\bfw$, the nonlinear least squares regression aims to solve the optimization problem
   \begin{align}
       \label{equ:ls_problem}
       \min_{\bfw\in\bbR^m} L(\bfw) = \frac{1}{2}\sum_{i=1}^{n}(f(\bfw, \bfx_i) - y_i)^2.
   \end{align}
   In the seminal work~\citep{jacot2018neural}, the authors consider the case when $f$ is a neural network with infinite width. They showed that optimization on this problem using gradient flow involves a special kernel which is called neural tangent kernel (NTK). The follow-up works further extended the relation between optimization and NTK which can be concluded in the following lemma:
   \begin{lem}[Lemma 3.1 in \cite{arora2019exact}, see also \cite{dou2019training, mei2019mean}]\label{lem:dynamics}
	Consider optimizing problem \eqref{equ:ls_problem} by gradient descent with infinitesimally small learning rate: $\frac{\mathrm{d}\bfw_t}{\mathrm{d} t} = -\nabla L(\bfw_t) $. where $\bfw_t$ is the parameters at time $t$.
    Let $\bff_t = \left( f(\bfw_t, \bfx_i) \right)_{i=1}^n \in \bbR^n $ be the network outputs on all $\bfx_i$'s at time $t$, and $\bfy = (y_i)_{i=1}^n$ be the desired outputs. Then $\bff_t$ follows the following evolution: %
    \begin{equation} \label{eqn:output-dynamics-lemma}
    \frac{\mathrm{d} \bff_t}{\mathrm{d} t} = - \bfG_t \cdot (\bff_t - \bfy),
    \end{equation}
    where $\bfG_t$ is an $n\times n$ positive semidefinite matrix, i.e. \textbf{the Gram matrix w.r.t. the NTK at time $t$}, whose $(i, j)$-th entry is $\left\langle \nabla_\bfw f(\bfw_t, \bfx_i), \nabla_\bfw  f(\bfw_t, \bfx_j)\right\rangle$.
    \end{lem}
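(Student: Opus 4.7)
The plan is to derive the evolution equation by a direct application of the chain rule, together with an explicit computation of $\nabla L(\bfw_t)$. First I would write, for each data index $i$,
\[
\frac{\mathrm{d}}{\mathrm{d} t} f(\bfw_t,\bfx_i) = \iprod{\nabla_\bfw f(\bfw_t,\bfx_i)}{\tfrac{\mathrm{d} \bfw_t}{\mathrm{d} t}} = -\iprod{\nabla_\bfw f(\bfw_t,\bfx_i)}{\nabla L(\bfw_t)},
\]
where the second equality uses the gradient flow assumption $\tfrac{\mathrm{d} \bfw_t}{\mathrm{d} t} = -\nabla L(\bfw_t)$.

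Next I would compute $\nabla L(\bfw_t)$ by differentiating the sum of squares in \eqref{equ:ls_problem} term by term:
\[
\nabla L(\bfw_t) = \sum_{j=1}^{n} \bigl(f(\bfw_t,\bfx_j) - y_j\bigr)\, \nabla_\bfw f(\bfw_t,\bfx_j).
\]
Substituting this into the previous display and interchanging the inner product with the finite sum gives
\[
\frac{\mathrm{d}}{\mathrm{d} t} f(\bfw_t,\bfx_i) = -\sum_{j=1}^{n} \iprod{\nabla_\bfw f(\bfw_t,\bfx_i)}{\nabla_\bfw f(\bfw_t,\bfx_j)}\bigl(f(\bfw_t,\bfx_j) - y_j\bigr).
\]
By the definition of $\bfG_t$ stated in the lemma, the coefficient of $(f(\bfw_t,\bfx_j)-y_j)$ is exactly the $(i,j)$-entry $(\bfG_t)_{ij}$, so the right-hand side is the $i$-th component of $-\bfG_t(\bff_t - \bfy)$. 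Stacking over $i=1,\dots,n$ yields \eqref{eqn:output-dynamics-lemma}.

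The positive semidefiniteness of $\bfG_t$ follows immediately because $\bfG_t = \bfJ_t \bfJ_t^{\top}$, where $\bfJ_t \in \bbR^{n\times m}$ is the Jacobian whose $i$-th row is $\nabla_\bfw f(\bfw_t,\bfx_i)^{\top}$; any Gram matrix of this form is PSD. There is no real obstacle here: the statement is essentially a bookkeeping consequence of the chain rule and the definition of the NTK Gram matrix, and the only thing to verify carefully is that the parameter-space inner products appearing after applying the chain rule line up with the entries of $\bfG_t$ as defined. The result holds for any differentiable parameterization $f$ and does not require any width or overparameterization assumption at this stage.
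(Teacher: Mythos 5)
Your proof is correct: the chain-rule computation, the explicit gradient of the square loss, and the identification $\bfG_t = \bfJ_t\bfJ_t^\top$ for positive semidefiniteness are exactly the standard argument. The paper itself gives no proof of this lemma (it is quoted from the cited reference), and your derivation matches the standard one used there, including the correct observation that no width or overparameterization assumption is needed at this stage.
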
 
    The key idea of \cite{jacot2018neural} and its extensions \citep{du2018gradient,du2018gradient_deep,zou2018stochastic,allen2018learning,oymak2018overparameterized,lee2019wide,yang2019scaling,arora2019fine,arora2019exact,cao2019generalization,zou2019improved} is that when the network is sufficiently wide, the Gram matrix at initialization $\bfG_0$ is close to a fixed positive definite matrix defined by the infinite-width kernel and \emph{$\bfG_t$ is close to $\bfG_0$ during training for all $t$}. Under this situation, $\bfG_t$ remains invertible, and the above dynamics is then identical to the dynamics of solving \emph{kernel regression} with gradient flow w.r.t. the current kernel at time $t$. In fact, \cite{arora2019exact} rigorously proves that a fully-trained sufficiently wide ReLU neural network is equivalent to the kernel regression predictor.

    As pointed out in \cite{chizat2018note}, the whole idea of NTK can be briefly summarized as a linear approximation using first order Taylor expansion. We give an example of this idea on the NTK at initialization:
      \begin{align}
       \label{equ:ls_first_order_NTK}
       f(\bfw, \bfx_i) - f(\bfw_0, \bfx_i) \approx \nabla_\bfw f(\bfw_0, \bfx_i) \cdot (\bfw - \bfw_0),
    \end{align}
    where $\nabla_\bfw f(\bfw_0, \bfx)$ can then be viewed as an explicit expression of feature map at $\bfx$, $\bfw - \bfw_0$ is the parameter in reproducing kernel Hilbert space (RKHS) induced by NTK and $f(\bfw, \bfx_i) - f(\bfw_0, \bfx_i)$ the target value.%
    
    The idea of linear approximation is also used in the classic Gauss-Newton method \citep{golub1965numerical} to obtain an acceleration algorithm for solving nonlinear least squares problem~\eqref{equ:ls_problem}. Concretely, at iteration $t$, Gauss-Newton method takes the following first-order approximation:
   \begin{align}
       \label{equ:ls_first_order}
       f(\bfw, \bfx_i) - f(\bfw_t, \bfx_i) \approx \nabla_\bfw f(\bfw_t, \bfx_i) \cdot (\bfw - \bfw_t),
   \end{align}
   where $\bfw_t$ stands for the parameter at iteration $t$. We note that this is also \emph{the linear expansion for deriving NTK at time $t$}. According to Eq. (\ref{equ:ls_problem}) and (\ref{equ:ls_first_order}), to update the parameter, one can instead solve the following problem. %
   \begin{align}
       \label{equ:ls_linear_solution}
       \bfw_{t+1} = \argmin_\bfw \frac{1}{2}\norm{\bff_t + \bfJ_t (\bfw - \bfw_t) - \bfy}_2^2,
   \end{align}
   where $\bff_t, \bfy$ have the same meaning as in Lemma~\ref{lem:dynamics}, and $\bfJ_t = \left(\nabla_\bfw f(\bfw_t, \bfx_1), \cdots, \nabla_\bfw f(\bfw_t, \bfx_n)\right)^\top\in\bbR^{n\times m}$ is the Jacobian matrix.
   
   A necessary and sufficient condition for $\bfw$ to be the solution of Eq.~\eqref{equ:ls_linear_solution} is
   \begin{align}
       \label{equ:ls_condition}
       (\bfJ_t^\top\bfJ_t) \cdot (\bfw - \bfw_t) = -\bfJ_t^\top(\bff_t - \bfy).
   \end{align}
   Below we will denote $\bfH_t := \bfJ_t^\top\bfJ_t \in \bbR^{m \times m}$. For under-parameterized model (i.e., the number of parameters $m$ is less than the number of data $n$), $\bfH_t$ is invertible, and the update rule is
   \begin{align}
       \label{equ:ls_update}
       \bfw_{t+1} = \bfw_t - \bfH_t^{-1}\bfJ_t^\top(\bff_t - \bfy).
   \end{align}
   This can also be viewed as an approximate Newton's method using $\bfH_t = \bfJ_t^\top\bfJ_t$ to approximate the Hessian matrix. In fact, the exact Hessian matrix is
   \begin{align}
       \label{equ:ls_Hessian}
       \nabla^2_\bfw \frac{1}{2}\sum_{i=1}^{n}(f(\bfw_t, \bfx_i) - y_i)^2 = \bfJ_t^\top\bfJ_t + \sum_{i=1}^n (f(\bfw_t, \bfx_i) - y_i)\nabla^2_\bfw f(\bfw_t, \bfx_i) .
   \end{align}
   In the case when $f$ is only mildly nonlinear w.r.t. $\bfw$ at data point $\bfx_i$'s, $\nabla^2_\bfw f(\bfw_t, \bfx_i)\approx 0$, and $\bfH_t$ is close to the real Hessian. In this situation, the behavior of the Gauss-Newton method is similar to that of Newton's method, and thus can achieve a superlinear convergence rate \citep{golub1965numerical}.

   \section{The Gram-Gauss-Newton Method}
   \label{sec:GGN_regression}
   The classic second-order methods using approximate Hessian such as Gauss-Newton method described in the previous section face obvious difficulties dealing with the intractable approximate Hessian matrix when the regression model is an overparameterized neural network. %
   
   In Section~\ref{sec:GGN}, we develop a Gram-Gauss-Newton (GGN) method which is inspired by NTK kernel regression and does not require the computation of the approximate Hessian. In Section~\ref{sec:convergence}, we show that for sufficiently wide neural networks, GGN has quadratic convergence rate.
   In Section~\ref{sec:analysis}, we show that the additional computational cost (per iteration) of GGN compared to SGD is small.

   \subsection{The Gram-Gauss-Newton Method for Overparameterized Neural Networks}
   \label{sec:GGN}
   We now describe our GGN method to learn overparameterized neural networks for regression problems. As mentioned in the previous sections, for sufficiently wide networks, using gradient descent for solving the regression problem~\eqref{equ:ls_problem} has similar dynamics as using gradient descent for solving NTK kernel regression (Lemma~\ref{lem:dynamics}) w.r.t. NTK at each step. However, one can also solve the kernel regression problem w.r.t. the NTK at each step immediately using the explicit formula of kernel regression. By explicitly solving instead of using gradient descent to solve NTK kernel regression, one can expect the optimization to get accelerated. We propose our Gram-Gauss-Newton (GGN) method to directly solve the NTK kernel regression with Gram matrix $\bfG_t$ at each time step $t$. Note that the feature map of NTK at time $t$, based on the approximation in Eq.~\eqref{equ:ls_first_order}, can be expressed as $x\mapsto \nabla_\bfw f(\bfw_t, \bfx)$, and the linear parameters in RKHS are $\bfw - \bfw_t$, also the target is $f(\bfw, \bfx_i) - f(\bfw_t, \bfx_i)$. Therefore, the kernel (ridgeless) regression solution~\citep{mohri2018foundations,liang2018just} gives the update
    \begin{align}
    \label{equ:ggn_update_basic}
    \bfw_{t+1} = \bfw_t - \bfJ_{t,S}^\top\bfG_{t,S}^{-1}(\bff_{t,S} - \bfy_S),
    \end{align}
    where $\bfJ_{t,S}$ is the matrix of features at iteration $t$ computed on the training data set $S$ which is equal to the Jacobian , $\bff_{t,S}$ and $\bfy_S$ are the vectorized outputs of neural network and the corresponding targets on $S$ respectively, and 
   $$\bfG_{t,S} = \bfJ_{t,S} \bfJ_{t,S}^\top$$
   is the Gram matrix of the NTK on $S$.
   
   One may wonder what is the relation between our derivation from NTK kernel regression and the Gauss-Newton method. We point out that for overparameterized models, there are infinitely many solutions of Eq.~\eqref{equ:ls_linear_solution} but our update rule \eqref{equ:ggn_update_basic} essentially uses the \emph{minimum norm} solution. In other words, the GGN update rule re-derives the Gauss-Newton method with the minimum norm solution. This somewhat surprising connection is due to the fact that in kernel learning, people usually choose a kernel with powerful expressivity, i.e. the dimension of feature space is large or even infinite. However, by the representer theorem~\citep{mohri2018foundations}, the solution of kernel (ridgeless) regression lies in the $n$-dimensional subspace of RKHS and minimizes the RKHS norm. We refer the readers to Chapter 11 of \cite{mohri2018foundations} for details.
   As mentioned in Section~\ref{sec:intro}, the design of learning algorithms should consider not only optimization but also generalization. It has been shown that using mini-batch instead of full batch to compute derivatives is crucial for the learned model to have good generalization ability \citep{hardt2015train,keskar2016large, masters2018revisiting,mou2017generalization,zhu2018anisotropic}. Therefore, we propose a mini-batch version of GGN. The update rule is the following:
   \begin{align}
   \label{equ:ggn_update_mini_batch}
   \bfw_{t+1} = \bfw_t - \bfJ_{t,B_t}^\top\bfG_{t,B_t}^{-1}(\bff_{t,B_t} - \bfy_{B_t}),
   \end{align}
   where $B_t$ is the mini-batch used at iteration $t$, $\bfJ_{t,B_t}$ and $\bfG_{t,B_t}$ are the Jacobian and the Gram matrix computed using the data of $B_t$ respectively, and $\bff_{t,B_t}, \bfy_{B_t}$ are the vectorized outputs and the corresponding targets on $B_t$ respectively. $\bfG_{t,B_t} = \bfJ_{t,B_t} \bfJ_{t,B_t}^\top$ is a very small matrix when using a typical batch size.
   
   One difference between Eq.~\eqref{equ:ggn_update_mini_batch} and Eq.~\eqref{equ:ls_update} is that our update rule only requires to compute the Gram matrix $\bfG_{t,B_t}$ and its inverse. Note that the size of $\bfG_{t,B_t}$ is equal to the size of the mini-batch and is typically very small. So this also greatly reduces the computational cost. 

   Using the idea of kernel ridge regression (which can also be viewed as Levenberg-Marquardt extension~\citep{levenberg1944method} of Gauss-Newton method), we introduce the following variant of GGN:
   \begin{align}
   \label{equ:ggn_update_relaxed}
   \bfw_{t+1} 
   = \bfw_t - \bfJ_{t,B_t}^\top(\lambda \bfG_{t,B_t} + \alpha\bfI)^{-1}(\bff_{t,B_t} - \bfy_{B_t}), 
   \end{align}
   where $\lambda > 0$ is another hyper-parameter controlling the learning process. Our algorithm is formally described in Algorithm \ref{alg:mggn}.

   \begin{algorithm}
       \caption{(Mini-batch) Gram-Gauss-Newton Method}
       \label{alg:mggn}
       \begin{algorithmic}[1]
           \State \textbf{Input}:  Training dataset $S$. Hyper-parameters $\lambda$ and $\alpha$.
           \State Initialize the network parameter $w_0$. Set $t=0$.
           \For{each iteration}
           \State Fetch a mini-batch $B_t$ from the dataset.
           \State Calculate the Jacobian matrix $\bfJ_{t,B_t}$.
           \State Calculate the Gram matrix $\bfG_{t,B_t} = \bfJ_{t,B_t}\bfJ_{t,B_t}^\top$.
           \State Update the parameter by $w_{t+1} 
           = w_t - \bfJ_{t,B_t}^\top(\lambda \bfG_{t,B_t} + \alpha\bfI)^{-1}(\bff_{t,B_t} - \bfy_{B_t})$.
           \State $t=t+1$.
           \EndFor
           
       \end{algorithmic}
   \end{algorithm}
   
   \subsection{Convergence Analysis for Overparameterized Neural Networks}
   \label{sec:convergence}
   In this subsection, we show that for two-layer neural networks, if the width is sufficiently large, then: (1) Full-batch GGN converges with quadratic convergence rate. (2) Mini-batch GGN converges linearly. (For clarity, here we only present a proof for two-layer neural networks, but we believe that it is not hard for the conclusion to be extended to deep neural networks using the techniques developed in \cite{du2018gradient_deep, zou2019improved}). 
   
   As we explained through the lens of NTK, the result is a consequence of the fact that for wide enough neural networks, if the weights are initialized according to a suitable probability distribution, then with high probability the output of the network is close to a linear function w.r.t. the parameters (but nonlinear w.r.t. the input of the network) in a neighborhood containing the initialization point and a global optimum. %
   Although the neural networks used in practice are far from that wide, this still motivates us to design the GGN algorithm.

   \paragraph{Neural network structure.} We use the following two-layer network
   \begin{align}
       f(\bfw, \bfx) = \frac{1}{\sqrt{M}}\bfa^\top\sigma(\bfW^\top \bfx) = \frac{1}{\sqrt{M}}\sum_{r = 1}^{M}\bfa_r\sigma(\bfw_r\bfx),
   \end{align}
   where $\bfx\in\mathbb{R}^{d}$ is the input, $M$ is the network width, $\bfW = (\bfw_1^\top, \cdots, \bfw_M^\top)^\top$%
    and $\sigma(\cdot)$ is the activation function. Each entry of $\bfW$ is i.i.d. initialized with the standard Gaussian distribution $\bfw_r \sim\mathcal{N}(0, \bfI_d)$ and each entry of $\bfa$ is initialized from the uniform distribution on $\{\pm1\}$. Similar to \cite{du2018gradient}, we only train the network on parameter $\bfW$ just for the clarity of the proof. We also assume the activation function $\sigma(\cdot)$ is $\ell$-Lipschitz and $\beta$-smooth, and $\ell$ and $\beta$ are regarded as $O(1)$ absolute constants.
   
   The key finding, as pointed out in \cite{jacot2018neural,du2018gradient,du2018gradient_deep}, is that under such initialization, the Gram matrix $\bfG$ has an asymptotic limit, which is, under mild conditions (e.g. input data not being degenerate etc., see Lemma F.2 of \cite{du2018gradient_deep}), a positive definite matrix 
   \begin{align}\label{equ:definition_K}
   \bfK(\bfx_i, \bfx_j) =\mathbb{E}_{\bfw\sim\mathcal{N}(0, \bfI)}\left[\bfx_i^\top\bfx_j\sigma'(\bfw\bfx_i)\sigma'(\bfw\bfx_j)\right].
   \end{align}
   \begin{asmp}[Least Eigenvalue of the Limit of Gram Matrix]\label{asmp:eigenvalue} We assume the matrix $\bfK$ defined in (\ref{equ:definition_K}) above is positive definite, and denote its least eigenvalue as \begin{align}\nonumber
       \lambda_0 = \lambda_{\min}(\bfK) > 0.
   \end{align}
   \end{asmp}
   Now we are ready to state our theorem of full-batch GGN:
   \begin{thm}[Quadratic Convergence of Full-batch GGN on Overparameterized Neural Networks]\label{thm:main}
   Assume Assumption~\ref{asmp:eigenvalue} holds. Assume the scale of the data is $\norm{\bfx_i}_2 = O(1)$, $\abs{y_i} = O(1)$ for $i \in \{1, \cdots, n\}$.
   If the network width 
   \begin{align*}\nonumber
   M = \Omega\bracket{\max\bracket{\frac{n^4}{\lambda_0^4}, \frac{n^2d\log(16n/\delta)}{\lambda_0^2}}},
   \end{align*}
   then with probability $1 - \delta$ over the random initialization, the full-batch version of GGN whose update rule is given in Eq. (\ref{equ:ggn_update_basic}) satisfies the following: 
   
   1) The Gram matrix $\bfG_{t,S}$ at each iteration is invertible; 
   
   2) The loss converges to zero in a way that
   \begin{align}
   \label{equ:second_order_convergence}
   \norm{\bff_{t+1} - \bfy}_2 \le \frac{C}{\sqrt{M}}\norm{\bff_t - \bfy}_2^2
   \end{align}
   for some $C$ that is independent of $M$, which is a second-order convergence.
   \end{thm}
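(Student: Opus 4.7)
The plan is to exploit the defining property of the Gauss-Newton step: it is exact whenever $f$ is linear in $\bfw$, so $\bff_{t+1}-\bfy$ equals precisely the second-order remainder of $f$ expanded around $\bfw_t$. Concretely, writing
\begin{equation*}
f(\bfw_{t+1},\bfx_i) = f(\bfw_t,\bfx_i) + \iprod{\nabla_\bfw f(\bfw_t,\bfx_i)}{\bfw_{t+1}-\bfw_t} + R_{t,i},
\end{equation*}
and stacking over $i$, I get $\bff_{t+1} = \bff_t + \bfJ_t(\bfw_{t+1}-\bfw_t) + \mathbf{R}_t$. Plugging in the GGN update $\bfw_{t+1}-\bfw_t = -\bfJ_t^\top \bfG_t^{-1}(\bff_t-\bfy)$ and using $\bfJ_t\bfJ_t^\top = \bfG_t$ cancels the linear term with $\bff_t-\bfy$ exactly, leaving $\bff_{t+1}-\bfy = \mathbf{R}_t$. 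The whole theorem therefore reduces to bounding $\norm{\mathbf{R}_t}_2$.

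I control $\mathbf{R}_t$ by the $\beta$-smoothness of $\sigma$. For each coordinate, $\abs{R_{t,i}} \le \tfrac{1}{2}\sup_\xi \norm{\nabla^2_\bfw f(\xi,\bfx_i)}_{\mathrm{op}}\norm{\bfw_{t+1}-\bfw_t}_2^2$, and the two-layer structure with $\norm{\bfx_i}_2 = O(1)$ and the $1/\sqrt{M}$ prefactor gives a Hessian operator norm of order $\beta/\sqrt{M}$. Summing the squares over $i$ yields $\norm{\mathbf{R}_t}_2 = O(\sqrt{n}/\sqrt{M})\norm{\bfw_{t+1}-\bfw_t}_2^2$. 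Since $\bfw_{t+1}-\bfw_t$ is the minimum-norm solution of $\bfJ_t \bfv = -(\bff_t-\bfy)$, one has $\norm{\bfw_{t+1}-\bfw_t}_2 \le \lambda_{\min}(\bfG_t)^{-1/2}\norm{\bff_t-\bfy}_2$. Thus, provided $\lambda_{\min}(\bfG_t) \ge \lambda_0/2$, the announced quadratic recursion $\norm{\bff_{t+1}-\bfy}_2 \le (C/\sqrt M)\norm{\bff_t-\bfy}_2^2$ follows, with $C$ absorbing $\beta$ and $1/\lambda_0$ factors independent of $M$.

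The remaining work is an induction on $t$ for the invariant that at every iteration $\bfW_t$ lies in a Frobenius ball of radius $\rho$ around $\bfW_0$ on which (i) $\lambda_{\min}(\bfG_{t,S}) \ge \lambda_0/2$ and (ii) the smoothness estimate above applies. The base case uses concentration of $\bfG_{0,S}$ around $\bfK$ (Lemma F.2 of \cite{du2018gradient_deep}): $M = \Omega(n^2 d\log(n/\delta)/\lambda_0^2)$ gives $\lambda_{\min}(\bfG_{0,S}) \ge 3\lambda_0/4$ with probability $1-\delta$. Standard NTK perturbation bounds of the form $\norm{\bfG_t - \bfG_0}_F \le O(n R/\sqrt M)$ on the ball of radius $R$ then preserve $\lambda_{\min}(\bfG_{t,S}) \ge \lambda_0/2$ whenever $R \lesssim \lambda_0\sqrt{M}/n$. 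To propagate, once the initial step contracts ($C\norm{\bff_0-\bfy}_2/\sqrt M < 1/2$) the sequence $\norm{\bff_t-\bfy}_2$ decreases doubly-exponentially, so $\sum_s \norm{\bfw_{s+1}-\bfw_s}_2 = O(\norm{\bff_0-\bfy}_2/\sqrt{\lambda_0}) = O(\sqrt{n/\lambda_0})$; forcing this total displacement below $\rho$, together with the base contraction, yields the width requirement stated in the theorem.

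The main obstacle I anticipate is not any individual estimate but the simultaneous balancing at the base case: the first step is the largest, and one needs $\bfW_1$ to already satisfy the invariant while simultaneously having $C\norm{\bff_0-\bfy}_2/\sqrt M < 1$ so that the quadratic recursion actually contracts rather than blows up. Both conditions translate into lower bounds on $M$, and getting the quantitative constants to fit together is what drives the final width exponent. A secondary technical point is cleanly handling the fact that all deterministic bounds on the iterates must hold on a single high-probability event determined by $\bfW_0$ alone, since the $\bfW_t$ are deterministic functions of the initialization.
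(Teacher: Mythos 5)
Your proposal is correct and follows essentially the same route as the paper: the identity $\bff_{t+1}-\bfy=\mathbf{R}_t$ with the linear term cancelled exactly is what the paper writes as $\bff_{t+1}-\bfy=(\bfJ_t-\bfJ_{t,t+1})\bfJ_t^\top\bfG_t^{-1}(\bff_t-\bfy)$ (your Lagrange-form remainder is their integral-form one), and the remainder bound via $\beta$-smoothness, the step-norm bound through $\bfG_t^{-1}$, the concentration of $\bfG_0$ around $\bfK$, and the induction keeping $\bfW_t$ in a Frobenius ball all match their Lemmas 2--5 and the inductive argument. Your per-step estimates ($\norm{\bfw_{t+1}-\bfw_t}_2\le\lambda_{\min}(\bfG_t)^{-1/2}\norm{\bff_t-\bfy}_2$) are in fact slightly tighter than the paper's $\norm{\bfJ_t^\top}_2\norm{\bfG_t^{-1}}_2$ bound, but this does not change the structure or the width requirement.
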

   
   For the mini-batch version of GGN, by the analysis of its NTK limit, the algorithm is essentially doing serial subspace correction \citep{xu2001method} on subspaces induced by mini-batch. So mini-batch GGN is similar to the \emph{Gauss-Siedel method}~\citep{golub1996matrix} applied to solving systems of linear equations, as shown in the proof of the following theorem. Similar to the full batch situation, GGN takes the exact solution of the ``kernel regression problem on the subspace" which is faster than just doing a gradient step to optimize on the subspace. Moreover, we note that existing results of the convergence of SGD on overparameterized networks usually use the idea that when the step size tends to zero, the noise introduced by SGD vanishes and SGD is equivalent to GD. However, to our knowledge, no convergence result considering large learning rate (e.g. has the same scale with the update of GGN) has been proposed. 
   
   In the following, we denote $\bfG_0 \in \bbR^{n\times n}$ as the initial Gram matrix. Let $n = bk$, where $b$ is the batch size and $k$ is the number of batches, and let 
   \begin{align*}
       \bfG_{0, ij} := \bfG_{0}((i-1)b+1:ib, (j-1)b+1:jb)
   \end{align*}
   be the $(i,j)$-th $b\times b$ block of $\bfG_0$. We define the iteration matrix
   \begin{align}\label{equ:definition_A}
        \bfA = \bfL^\top(\bfD - \bfL)^{-1} \in \bbR^{n\times n},
   \end{align}
   where
   \begin{align*}
       \bfD = 
       \begin{bmatrix}
       \bfG_{0, 11}& \mathbf{0}& \cdots& \mathbf{0}\\
       \mathbf{0}& \bfG_{0, 22}& \cdots &\mathbf{0}\\
       \vdots & \vdots &\ddots & \vdots \\
       \mathbf{0}& \mathbf{0} &\ddots & \bfG_{0, kk}
       \end{bmatrix},\;\;
       \bfL = -
       \begin{bmatrix}
       \mathbf{0}& \mathbf{0}& \cdots& \mathbf{0}\\
       \bfG_{0, 21}& \mathbf{0}& \cdots &\mathbf{0}\\
       \vdots & \vdots &\ddots & \vdots \\
       \bfG_{0, k1}& \bfG_{0, k2} &\ddots & \mathbf{0}
       \end{bmatrix},
   \end{align*}
   represents the block-diagonal and block-lower-triangular parts of $\bfG_0$. We will show that the convergence of mini-batch GGN is highly related to the spectral radius of $\bfA$. To simplify the proof, we make the following mild assumption on $\bfA$:
   \begin{asmp}[Assumption on the Iteration Matrix] \label{asmp:diagonalizable}
   Assume the matrix $\bfA$ defined in (\ref{equ:definition_A}) above is diagonalizable. So we choose an arbitary diagonalization of $\bfA$ as $
       \bfA = \bfP^{-1}\bfQ\bfP$ and denote
   \begin{align*}
       \mu := \norm{\bfP}_2\norm{\bfP^{-1}}_2.
   \end{align*}
   \end{asmp}
   We note that Assumption~\ref{asmp:diagonalizable} is only for the sake of simplicity. Even if it does not hold, an infinitesimally small perturbation can make any matrix diagonalizable, and it will not affect the proof. 
   
   Now we are ready to state the theorem for mini-batch GGN.
   
   \begin{thm}[Convergence of Mini-batch GGN on Overparameterized Neural Networks]\label{thm:main_2}
   Assume Assumption~\ref{asmp:eigenvalue} and $\ref{asmp:diagonalizable}$ hold. Assume the scale of the data is $\norm{\bfx_i}_2 = O(1)$, $\abs{y_i} = O(1)$ for $i \in \{1, \cdots, n\}$. We use the mini-batch version of GGN whose update rule is given in Eq.(\ref{equ:ggn_update_mini_batch}), and the batch $B_t$ is chosen sequentially and cyclically with a fixed batch size $b$ and $k = n/b$ updates per epoch. 
   If the network width 
   \begin{align*}\nonumber
   M = \max\bracket{\Omega\bracket{\frac{\mu^2n^{18}}{\lambda_0^{16}}}, \Omega\bracket{\frac{n^2d\log(16n/\delta)}{\lambda_0^2}}},
   \end{align*}
   then with probability $1 - \delta$ over the random initialization, we have the following: 
   
   1) The Gram matrix $\bfG_{t,B_t}$ at each iteration is invertible; 
   
   2) The loss converges to zero in a way that after $T$ epochs, we have
   \begin{align}
   \label{equ:linear_convergence}
   \norm{\bff_{Tk} - \bfy}_2 \le \mu\sqrt{n}\bracket{1 - \Omega\bracket{\frac{\lambda_0^2}{n^2}}}^{T}.
   \end{align}
   \end{thm}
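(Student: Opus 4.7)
The plan is to reduce one epoch of mini-batch GGN, in the NTK linearized regime, to a single block Gauss--Seidel step on an $n\times n$ linear system, and then to couple this with a perturbation bound for the finite-width non-linearity of $f$. In the linearization $f(\bfw,\bfx_i)\approx f(\bfw_0,\bfx_i)+\nabla_\bfw f(\bfw_0,\bfx_i)^\top(\bfw-\bfw_0)$, one substep of \eqref{equ:ggn_update_mini_batch} on batch $B_t$ exactly zeroes out the sub-vector $\bfr_{t,B_t}:=\bff_{t,B_t}-\bfy_{B_t}$ of the residual and modifies every other sub-vector $\bfr_{t,B_j}$ by $-\bfG_{0,jt}\bfG_{0,tt}^{-1}\bfr_{t,B_t}$. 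Cycling through the $k$ batches $B_1,\dots,B_k$ in one epoch is therefore exactly one block Gauss--Seidel iteration for the linear system $\bfG_0\mathbf{c}=-\bfr_0$, where $\mathbf{c}$ is the coefficient vector parametrizing $\bfw-\bfw_0=\bfJ_0^\top\mathbf{c}$. Consequently, in this linearized regime the residual evolves through a matrix similar to $\bfA$ of~\eqref{equ:definition_A}.

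Next, I bound $\rho(\bfA)$. Exactly as in the proof of Theorem~\ref{thm:main}, the chosen width combined with Assumption~\ref{asmp:eigenvalue} yields $\lambda_{\min}(\bfG_0)\ge\lambda_0/2$ with high probability, and the data normalization $\|\bfx_i\|_2=O(1)$ forces $\|\bfG_0\|_2\le\tr(\bfG_0)=O(n)$. Since $\bfG_0$ is SPD, a quantitative form of the Ostrowski--Reich theorem (bounding the Gauss--Seidel error in the $\bfG_0$-inner product) yields $\rho(\bfA)\le 1-\Omega(\lambda_0^2/n^2)$. Assumption~\ref{asmp:diagonalizable} then upgrades this to the norm bound $\|\bfA^T\|_2\le\mu\bigl(1-\Omega(\lambda_0^2/n^2)\bigr)^T$, and combined with $\|\bfr_0\|_2=O(\sqrt{n})$ this yields \eqref{equ:linear_convergence} in the linearized regime.

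The remaining ingredient is the perturbation bound for non-linearity. Using the same estimates as for Theorem~\ref{thm:main} (Lipschitzness and smoothness of $\sigma$ together with Gaussian concentration at initialization), whenever $\|\bfW_t-\bfW_0\|_F\le R$ for a suitable radius $R=\mathrm{poly}(n,1/\lambda_0)/\sqrt{M}$, the mini-batch Gram matrix $\bfG_{t,B_t}$ is invertible and the true residual update differs from the linearized Gauss--Seidel update by at most $O(\|\bfr_t\|_2^2/\sqrt{M})$. The proof then closes by induction on the epoch index $T$: the hypothesis is that (a) all intermediate iterates lie in the radius-$R$ ball around $\bfW_0$ and (b) $\|\bfr_{Tk}\|_2$ satisfies~\eqref{equ:linear_convergence}. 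In the induction step, combining the epoch-level Gauss--Seidel contraction with the perturbation bound gives the new residual estimate, while the per-substep parameter jump $\|\bfJ_{t,B_t}^\top\bfG_{t,B_t}^{-1}\bfr_{t,B_t}\|_2=O(\|\bfr_t\|_2/\sqrt{\lambda_0})$, summed over substeps and epochs via the geometric series $\sum_{T\ge 0}(1-\Omega(\lambda_0^2/n^2))^T=O(n^2/\lambda_0^2)$, keeps the parameters within the radius-$R$ ball.

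I expect the main obstacle to be that the residual is \emph{not} monotone within an epoch: only the end-of-epoch residual contracts by the Gauss--Seidel rate, whereas the intermediate substep residuals can be amplified by the conditioning factor $\mu$ hidden in $\bfA$'s diagonalization. Since each non-linear perturbation scales with the squared intermediate residual, this amplification propagates into the error budget and forces the heavy width requirement $M=\Omega(\mu^2 n^{18}/\lambda_0^{16})$. The $\mu^2$ there absorbs the intermediate amplification, while the $n^{18}/\lambda_0^{16}$ factor combines (i) the geometric series of per-epoch contractions with rate $1-\Omega(\lambda_0^2/n^2)$, (ii) the conditioning $\kappa(\bfG_0)=O(n/\lambda_0)$ that relates residual norms to coefficient errors, and (iii) the standard polynomial factors needed to bound $\|\bfW_t-\bfW_0\|_F$, $\|\bfJ_t-\bfJ_0\|_2$ and $\|\bfG_{t,B_t}^{-1}\|_2$ uniformly along all $Tk$ substeps of the trajectory.
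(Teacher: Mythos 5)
Your proposal follows essentially the same route as the paper: identify one epoch of mini-batch GGN in the linearized regime with a block Gauss--Seidel sweep for $\bfG_0\mathbf{c}=-\bfr_0$, bound the spectral radius of the iteration matrix $\bfA=\bfL^\top(\bfD-\bfL)^{-1}$ by $1-\Omega(\lambda_0^2/n^2)$ using positive definiteness of $\bfG_0$ (the paper proves this directly via the eigenvector identity $\lambda(\bfD-\bfL^\top)\mathbf{v}=\bfL\mathbf{v}$, which is exactly the quantitative Ostrowski--Reich argument you invoke), use Assumption~\ref{asmp:diagonalizable} to convert the spectral radius into the norm bound $\mu(\rho+\mu\delta)^T$ for the perturbed products, and close by induction keeping all iterates in a ball $B(R)$ around $\bfW_0$. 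You also correctly identify the intra-epoch amplification of intermediate residuals as the source of the heavy width requirement; the paper handles this via the explicit block formula for $\bff_{t(i+1)}-\bfy$ in terms of $\bff_t-\bfy$.

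One localized error you should fix: you take the radius to be $R=\mathrm{poly}(n,1/\lambda_0)/\sqrt{M}$. As written the induction cannot close, because each substep moves the parameters by $\bigl\lVert\bfJ_{t,B_t}^\top\bfG_{t,B_t}^{-1}\bfr_{t,B_t}\bigr\rVert_2=\Theta(\sqrt{n}\,\lVert\bfr_t\rVert_2/\lambda_0)$, which is independent of $M$, so for large $M$ the iterates immediately leave a ball whose radius shrinks like $1/\sqrt{M}$. The correct bookkeeping (as in the paper, which takes $R=\Theta(n^5/\lambda_0^4)$) is that $R$ is $M$-independent --- it is the total parameter displacement summed over the trajectory via the geometric series you describe --- and it is the induced perturbations of the Jacobian and Gram matrices that scale like $R/\sqrt{M}$, hence $\lVert\bfA_t-\bfA\rVert_2=O(n^2R/(\lambda_0^2\sqrt{M}))$, which is then forced below $(1-\rho(\bfA))/(2\mu)$ by the stated choice of $M$. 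With that correction your argument matches the paper's.
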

   \paragraph{Proof sketch for Theorem~\ref{thm:main} and \ref{thm:main_2}.}
   Denote $\bfJ_t = \bfJ(\bfW_t)$ and 
   \[
   \bfJ_{t, t+1} = \int_0^1 \bfJ((1-s)\bfW_{t}+s\bfW_{t+1})ds.
   \]
   For the full-batch version, we have
   \begin{align}
   \nonumber
   \norm{\bff_{t+1} - \bfy}_2 &= \norm{\bff_t - \bfy + \bfJ_{t, t+1} (\operatorname{vec}(\bfW_{t+1}) - \operatorname{vec}(\bfW_t))}_2\\
   \nonumber&= \norm{\bff_t - \bfy - \bfJ_{t, t+1}\bfJ_t^\top\bfG_t^{-1}(\bff_t-\bfy)}_2\\
   \nonumber&= \norm{(\bfJ_t - \bfJ_{t, t+1})\bfJ_t^\top\bfG_t^{-1}(\bff_t - \bfy)}_2\\
   &\le \norm{\bfJ_t - \bfJ_{t, t+1}}_2\norm{\bfJ_t^\top}_2\norm{\bfG_t^{-1}}_2\norm{\bff_t - \bfy}_2.\label{equ:sketch}
   \end{align}
   Then we control the first term in Eq.~\eqref{equ:sketch} in a way similar to the following:
   \begin{align*}
   \norm{\bfJ_t - \bfJ_{t, t+1}}_2 \le \frac{\hat{C}}{\sqrt{M}}\norm{\bfW_{t}-\bfW_{t+1}}_2 \le \frac{\hat{C}}{\sqrt{M}}\norm{\bfJ_t^\top}_2\norm{\bfG_t^{-1}}_2\norm{\bff_t - \bfy}_2,
   \end{align*}
   and if $\norm{\bfJ_t^\top}_2\norm{\bfG_t^{-1}}_2$ can be upper bounded, we get our result Eq.~(\ref{equ:second_order_convergence}). 
   
   For the mini-batch version, similarly we have 
    \begin{align}\nonumber
       \bff_{t+1} - \bfy 
       &= \bff_t - \bfy - \bfJ_{t, t+1}\bfJ_{t,B_t}^\top\bfG_{t,B_t}^{-1}(\bff_{t,B_t}-\bfy_{B_t})\\\label{equ:padded_G}
       &= \bff_t - \bfy - \bfJ_{t, t+1}\bfJ_{t,B_t}^\top\Tilde{\bfG}_{t,B_t}^{-1}(\bff_{t}-\bfy),
    \end{align}
    where the subscript $B_t$ denotes the sub-matrix/vector corresponding to the batch, and $\Tilde{\bfG}_{t,B_t}^{-1}\in \bbR^{b\times n}$ is a zero-padded version of $\bfG_{t,B_t}^{-1}$ to make Eq. (\ref{equ:padded_G}) hold. Therefore, after one epoch (from the $(tk+1)$-th to the $((t+1)k)$-th update), we have
    \begin{align*}
        \bff_{(t+1)k} - \bfy = \bracket{\prod_{t'=(t+1)k-1}^{tk}\bracket{\bfI - \bfJ_{t', t'+1}\bfJ_{t',B_{t'}}^\top\Tilde{\bfG}_{t',B_{t'}}^{-1}}}(\bff_{tk} - \bfy) =: \bfA_{t}(\bff_{tk} - \bfy).
    \end{align*}
    We will see that the matrix $\bfA_t$ is close to the matrix $\bfA$ defined in (\ref{equ:definition_A}), so it boils down to analyzing the spectral properties of $\bfA$. 
    
   For both theorems, we can compute that as $M$ increases, the norm of the update $\norm{\bfW_t - \bfW_{t+1}}_F$ does not increase with $M$, so the update is small compared to the Gaussian initialization where $\norm{\bfW_0}_2 =  \Theta(\sqrt{M})$. From this we can derive that the matrices $\bfJ, \bfG$ etc. remain close to their initialization, which makes bounding their norms possible. The full proof is in the appendix.\qed

   In conclusion, the accelerated convergence is related to the local linearity and the stability of the Jacobian and Gram matrix. We emphasize that our theorems serve more as a motivation than a justification of our GGN algorithm, because we expect that GGN works in practice, even under milder situations when $M$ is not as large as the theorem demands or for deep networks with different architectures, and that GGN would still perform much better than first-order methods.

   \subsection{Analysis of the Per-iteration Computational Complexity}
   \label{sec:analysis}
   We have proved that for sufficiently overparametrized deep neural networks, GGN has quadratic convergence rate. In this subsection, we analyze the per-iteration computational cost of GGN, and compare it to that of SGD.

   For every mini-batch (i.e., iteration), there are two major steps of computation in GGN:
   \begin{itemize}
   \item (A). Forward, and then backpropagate for computing the Jacobian matrix $\bfJ$.
   \item (B). Use $\bfJ$ to compute the update $\bfJ^\top(\lambda\bfG+\alpha\bfI)^{-1}(\bff - \bfy)$
   \end{itemize}
   
   \emph{We show that the computational complexity of (A) is the same as that of SGD with the same batch size; and the computational complexity of (B) is small compared to (A) for typical networks and batch sizes.} Thus, the per-iteration computation overhead of GGN is very small compared to SGD. Overall, in terms of training time, GGN can be much faster than SGD.
   
   For the computation in step (A), the forward part is just the same as that of SGD. For the backward part, \emph{for every input data}, GGN keeps track of the output's derivative for the \emph{nodes} in the middle of the computational graph. This part is just  the same as backpropagation in SGD. What is different is that GGN also, \emph{for every input data}, keeps track of the output's derivative for the \emph{parameters}; while in SGD the derivatives for the parameters are \emph{averaged over a batch of data}. However, it is not difficult to see the computational costs of GGN and SGD are the same.

   For the  computation in step (B), observe that the size of the Jacobian is $b \times m$ where $b$ is the batch size and $m$ is the number of parameters. The Gram matrix $\bfG_{t,B_t} = \bfJ_{t,B_t}\bfJ_{t,B_t}^\top$ in our Gram-Gauss-Newton method is of size $b\times b$ and it only requires $O(b^2m+b^3)$ for computing $\bfG_{t,B_t}$ and a matrix inverse. Multiplying the two matrices to $\bff-\bfy$ requires even less computation. Overall, the computational cost in step (B) is small compared to that of step (A).

   \section{Experiments}
   \label{sec:experiments}
   Given the theoretical findings above, in this section, we compare our proposed GGN algorithm with several baseline algorithms in real applications. In particular, we mainly study two regression tasks, AFAD-LITE~\citep{niu2016ordinal} and RSNA Bone Age~\citep{rsnaboneage}.

   \begin{figure}
      \centering
      \begin{subfigure}{.45\textwidth}
          \centering
         \includegraphics[width=\textwidth]{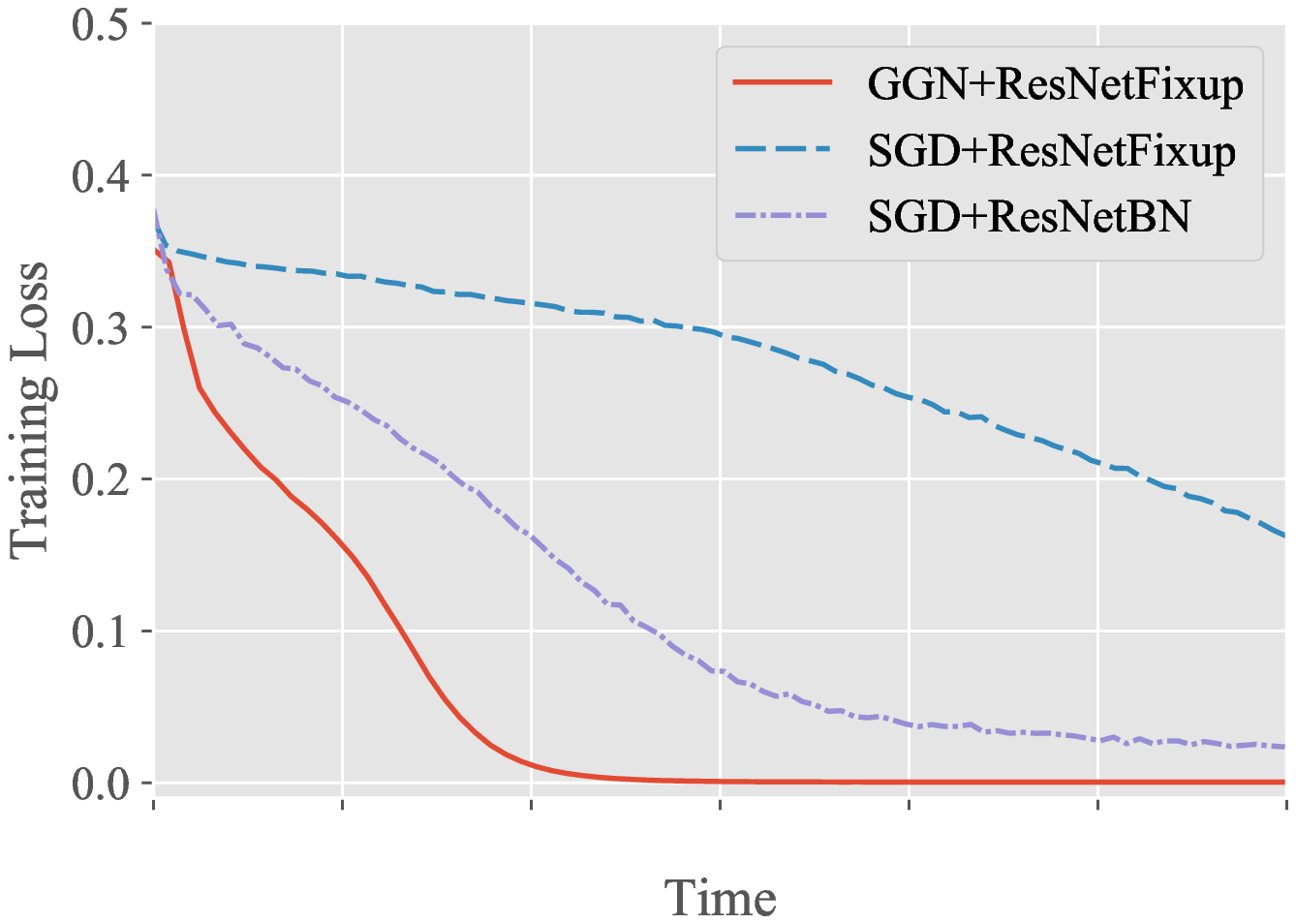}
         \caption{Loss-time curve on AFAD-LITE \label{face_train_time}}
      \end{subfigure}
      \begin{subfigure}{.45\textwidth}
          \centering
         \includegraphics[width=\textwidth]{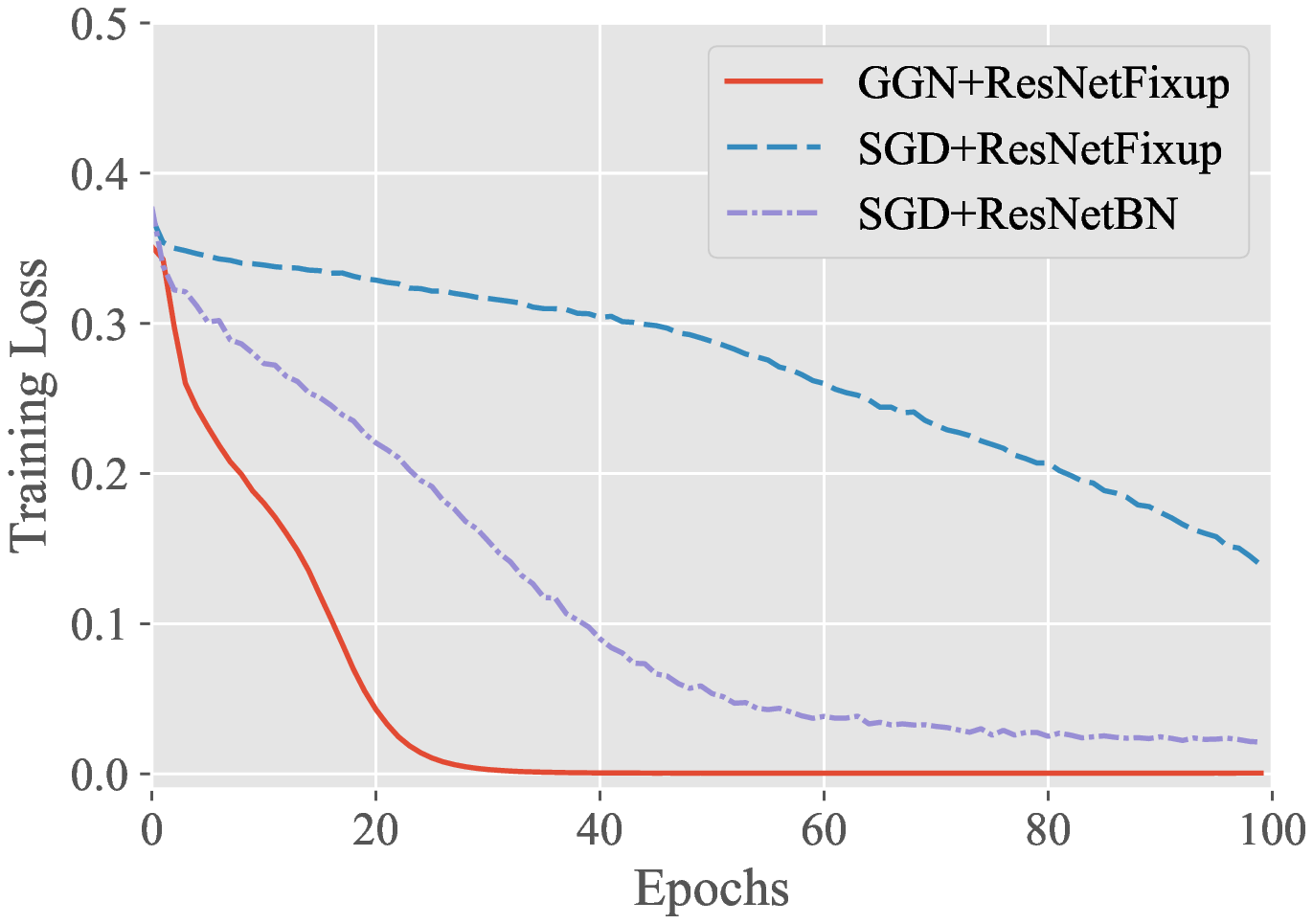}
         \caption{Loss-epoch curve on AFAD-LITE \label{face_train_epochs}}
      \end{subfigure}\\
      \begin{subfigure}{.45\textwidth}
          \centering
         \includegraphics[width=\textwidth]{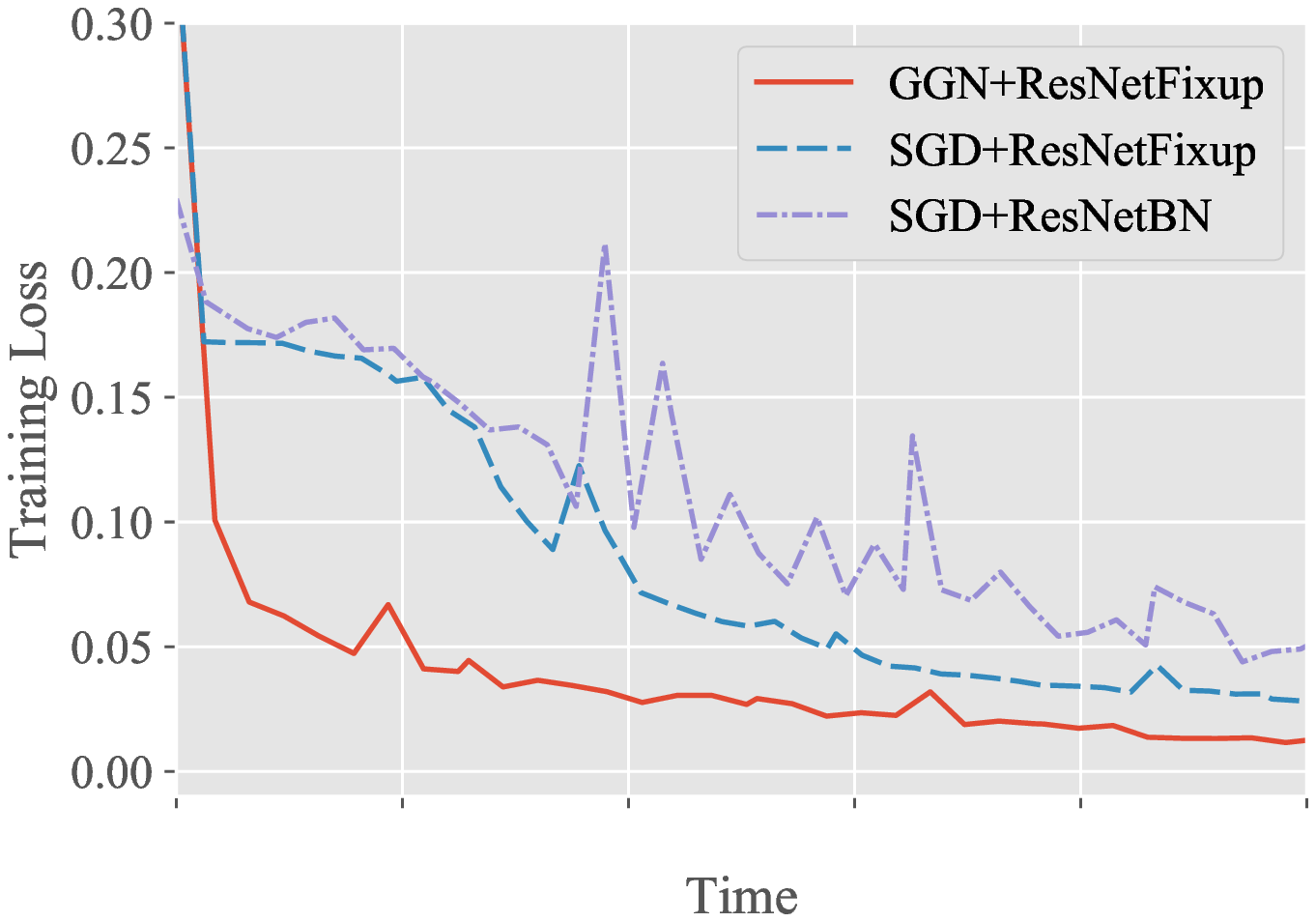}
         \caption{Loss-time curve on RSNA Bone Age \label{boneage_train_time}}
      \end{subfigure}
      \begin{subfigure}{.45\textwidth}
          \centering
         \includegraphics[width=\textwidth]{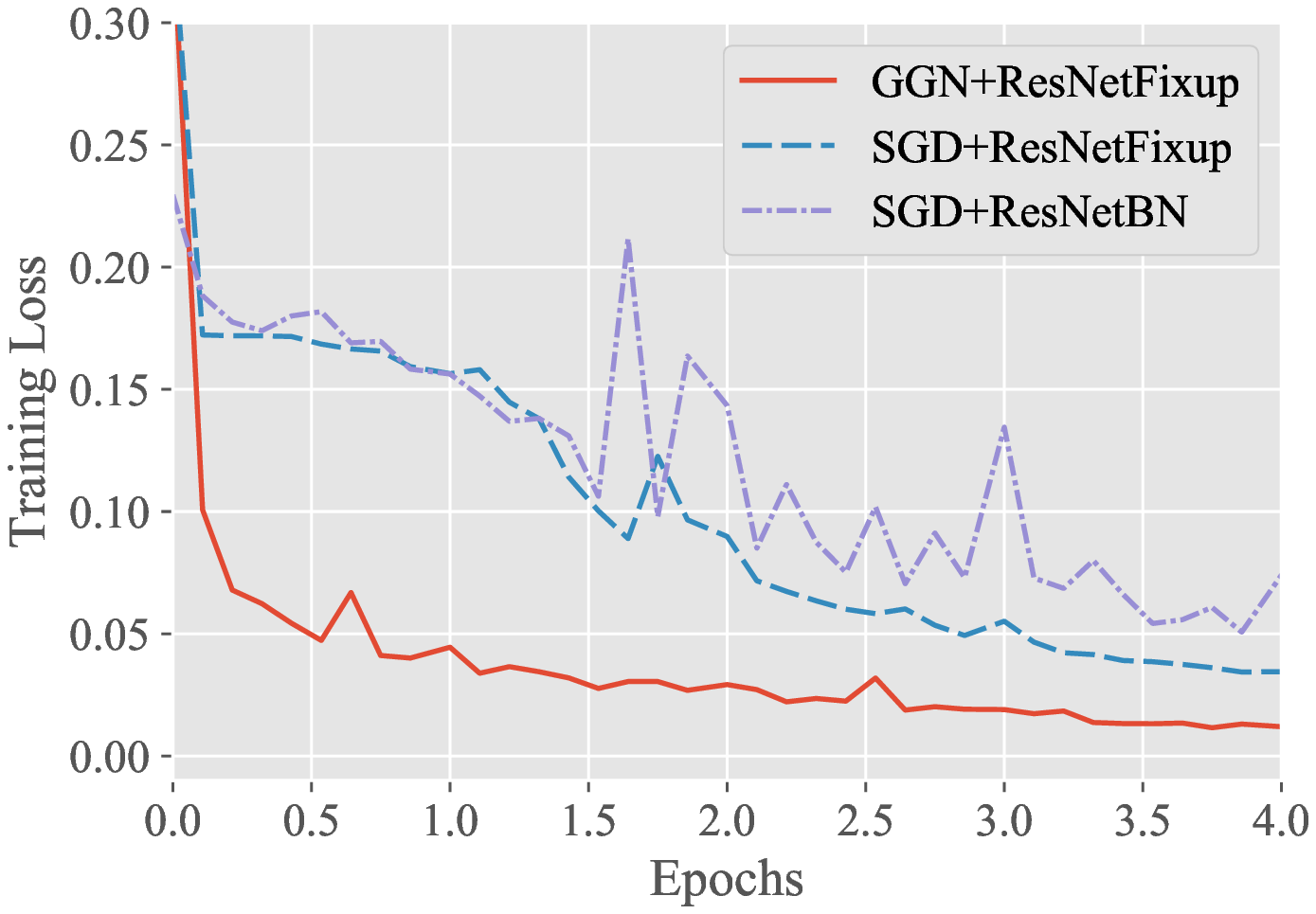}
         \caption{Loss-epoch curve on RSNA Bone Age \label{boneage_train_epochs}}
      \end{subfigure}
      \caption{Training curves of GGN and SGD on two regression tasks.}
      \label{fig:convergence}
   \end{figure}
   
   \subsection{Experimental setting}
   \paragraph{AFAD-LITE task} is to predict the age of human from the facial information. The training data of the AFAD-LITE task contains 60k facial images and the corresponding age for each image. We choose ResNet-32~\citep{he2016deep} as the base model architecture. During training, all input images are resized to $64 * 64$. We study two variants of the ResNet-32 architecture: ResNet-32 with batch normalization layer (referred to as \emph{ResNetBN}), and ResNet-32 with Fixup initialization~\citep{zhang2019fixup} (referred to as \emph{ResNetFixup}). In both settings, we use SGD as our baseline algorithm. In particular, we follow \cite{qian1999momentum} to use its momentum variant and set the hyper-parameters lr=$0.003$ and momentum=$0.9$ determined by selecting the best optimization performance using grid search. Since batch normalization is computed over all samples within a mini-batch, it is not consistent with our assumption in Section~\ref{sec:classic} that the regression function has the form of $f(\bfw, \bfx)$, which only depends on $\bfw$ and a single input datum $\bfx$. For this reason, the GGN algorithm does not directly apply to ResNetBN,
   and we test our proposed algorithm on ResNetFixup only. We set $\lambda = 1$ and $\alpha = 0.3$ for GGN. We follow the common practice to set the batch size to 128 for our proposed method and all baseline algorithms. Mean square loss is used for training.
   
   \paragraph{RSNA Bone Age task} is a part of the 2017 Pediatric Bone Age Challenge organized by the Radiological Society of North America (RSNA). It contains 12,611 labeled images. Each image in this dataset is a radiograph of a left hand labeled with the corresponding bone age. During training, all input images are resized to $64 * 64$. We also choose ResNetBN and ResNetFixup for this experiment, and use ResNetBN and ResNetFixup trained in the first task as warm-start initialization. We use lr$=0.01$ and momentum$=0.9$ for SGD, and use $\lambda = 1$ and $\alpha = 0.1$ for GGN. Batch size is set to 128 in these experiments, and mean square loss is used for training. 
   
   \subsection{Experimental results}
   \paragraph{Convergence.} The training loss curves of different optimization algorithms for AFAD-LITE and RSNA Bone Age tasks are shown in Figure~\ref{fig:convergence}. On both tasks, our proposed method converges much faster than the baselines. We can see from Figure~\ref{face_train_time} and Figure~\ref{face_train_epochs} that, on the AFAD-LITE task, the loss using our GGN method quickly decreases to nearly zero in 30 epochs. On the contrary, for both baselines using SGD, the loss decays much slower than our method in terms of wall clock time and epochs. Similar advantage of GGN can also be observed on the RSNA bone age task.
   
   \begin{figure}[t]
      \centering
      \begin{subfigure}{.36\textwidth}
          \centering
         \includegraphics[width=\textwidth]{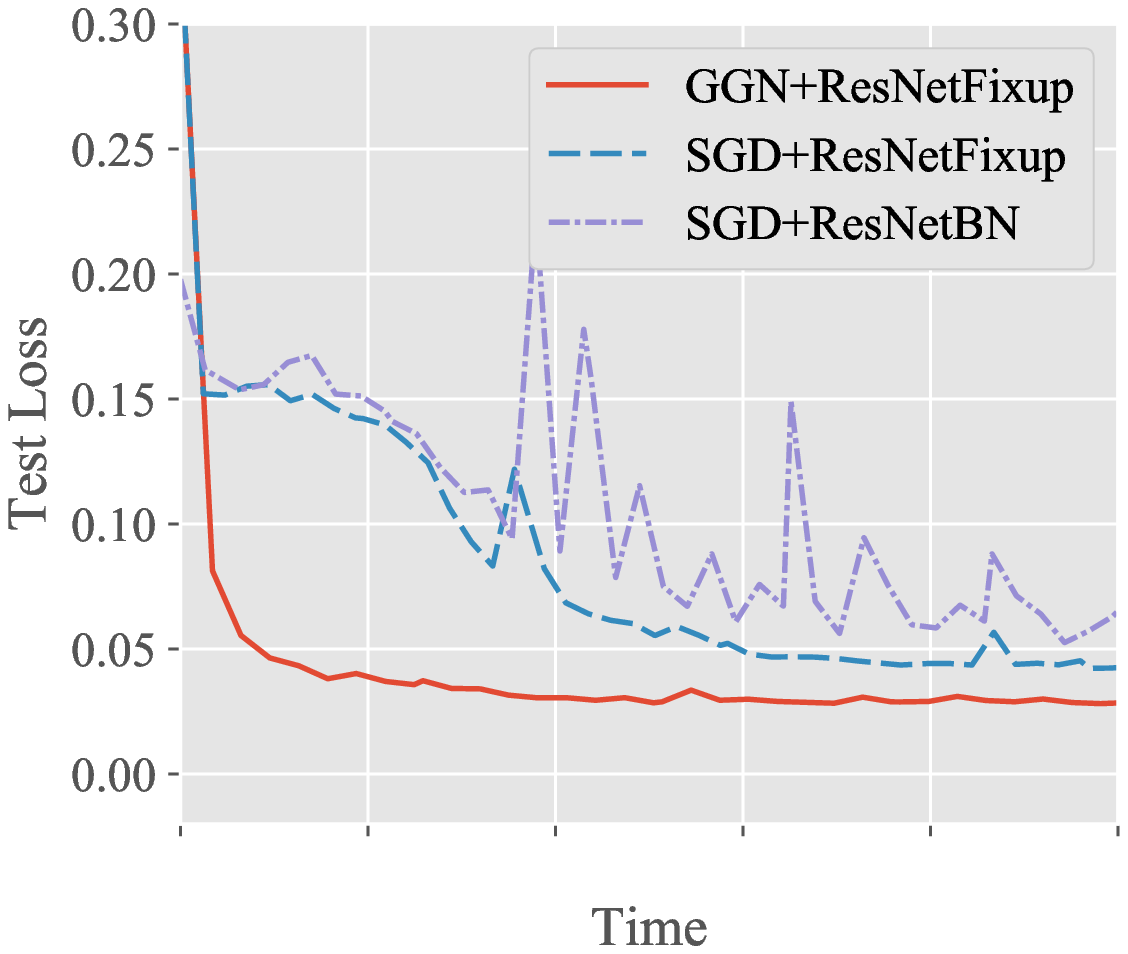}
         \caption{\label{boneage_test_time}}
      \end{subfigure}
      \begin{subfigure}{.36\textwidth}
          \centering
         \includegraphics[width=\textwidth]{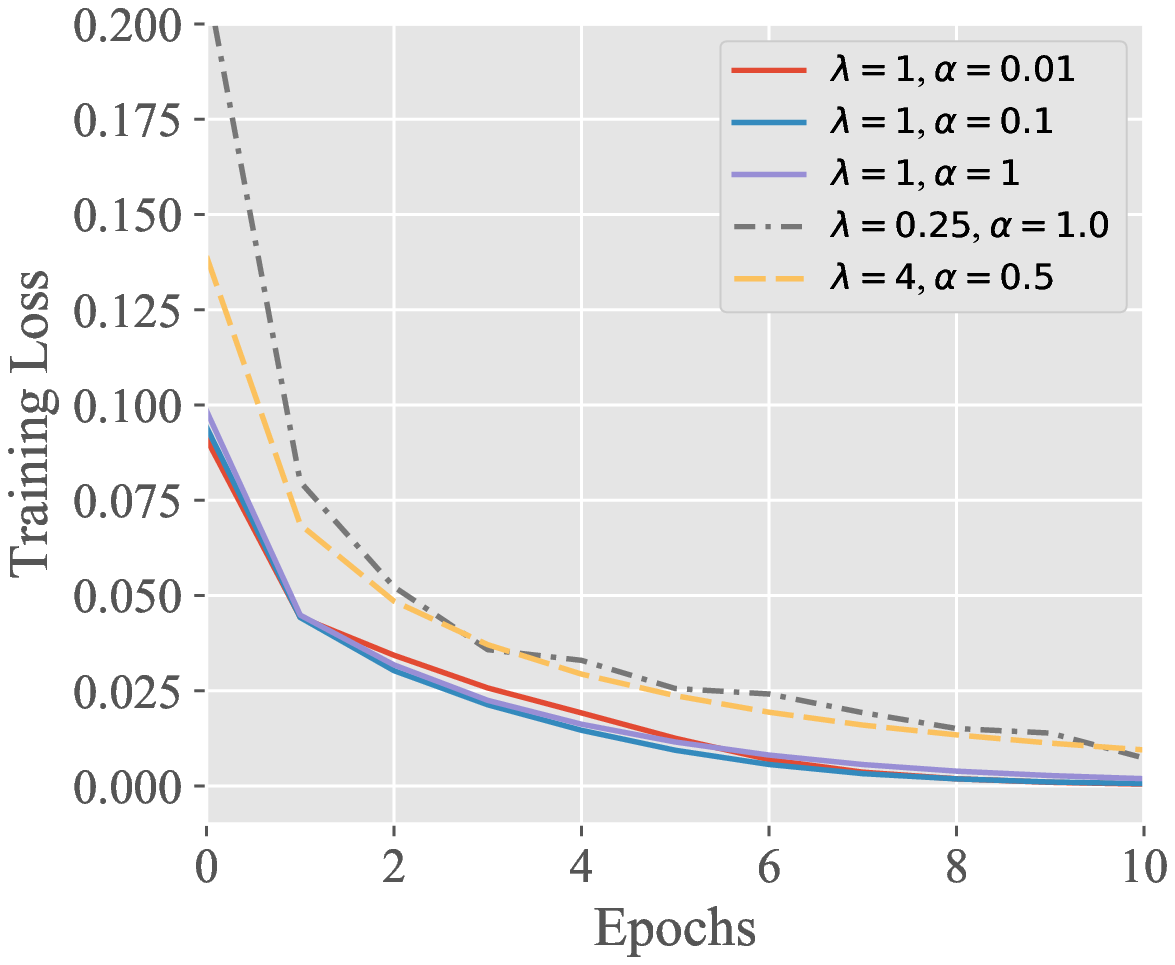}
         \caption{\label{ablation_loss_curve}}
      \end{subfigure}
      \begin{subfigure}{.252\textwidth}
          \centering
         \includegraphics[width=\textwidth]{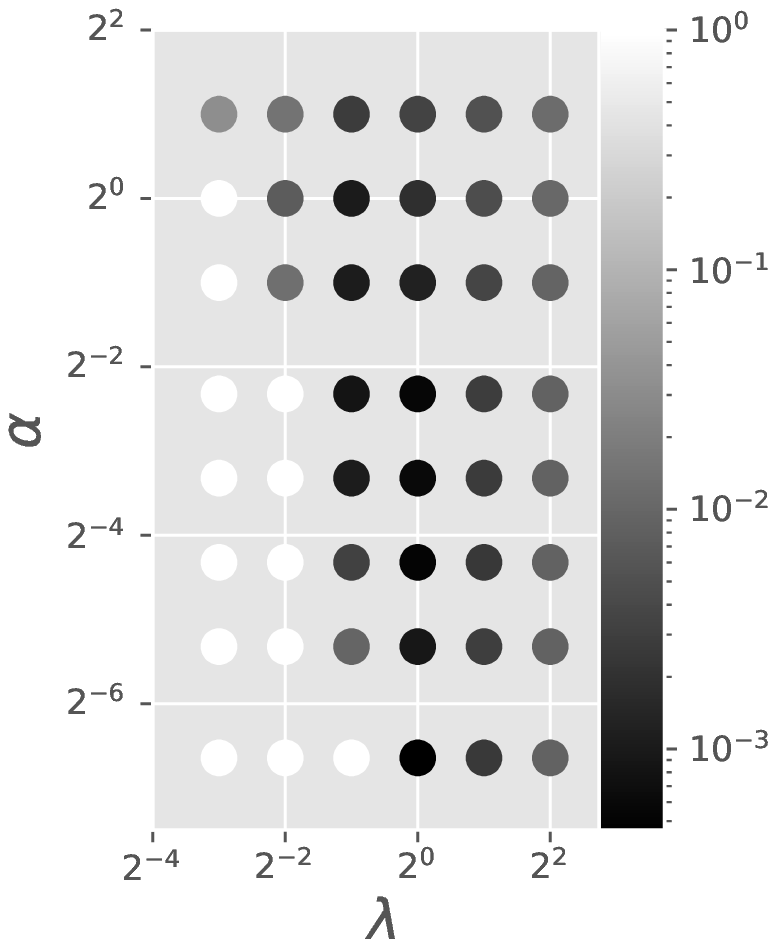}
         \caption{\label{ablation_lamda_alpha}}
      \end{subfigure}
      \vspace{0.1in}
      \caption{Test performance and ablation study on hyper-parameters on RSNA Bone Age dataset. (a) Test curves of GGN and SGD. (b) Training curves of GGN with different hyper-parameter configurations. The optimal $\alpha$ is shown for $\lambda = 0.25$ and $\lambda=4$ by grid search. (c) Training loss at 10$^{\text{th}}$ epoch of models trained using GGN with different hyper-parameters.}
      \label{fig:hyper-parameters}
   \end{figure}
   
   \paragraph{Generalization performance and different hyper-parameters.} We can see that our proposed method trains much faster than other baselines. However, as a machine learning model, generalization performance also needs to be evaluated. Due to space limitation, we only provide the test curve for the RSNA Bone Age task in Figure~\ref{boneage_test_time}. From the figure, we can see that the test loss of our proposed method also decreases faster than the baseline methods. Furthermore, the loss of our GGN algorithm is lower than those of the baselines. These results show that the GGN algorithm can not only accelerate the whole training process, but also learn better models. 
   
   We then study the effect of hyper-parameters used in the GGN algorithm. We try different $\lambda$ and $\alpha$ on the RSNA Bone Age task and report the training loss of all experiments at the 10$^{\text{th}}$ epoch. All results are plotted in Figure~\ref{ablation_lamda_alpha}. In the figure, the x-axis is the value of $\lambda$ and the y-axis is the value of $\alpha$. The gray value of each point corresponds to the loss, the lighter the color, the higher the loss. We can see that the model converges faster when $\lambda$ is close to $1$. In GGN, $\alpha$ can be considered as the inverse value of the learning rate in SGD. Empirically, we find that the convergence speed of training loss is not that sensitive to $\alpha$ given a proper $\lambda$, such as $\lambda=1$. Some training loss curves of different hyper-parameter configurations are shown in Figure~\ref{ablation_loss_curve}.

   \section{Conclusion and Discussions}
   \label{sec:conclusion}
   We propose a novel Gram-Gauss-Newton (GGN) method for solving regression problems with square loss using overparameterized neural networks. Despite being a second-order method, the computation overhead of the GGN algorithm at each iteration is small compared to SGD. We also prove that if the neural network is sufficiently wide, GGN algorithm enjoys a quadratic convergence rate. Experimental results on two regression tasks demonstrate that GGN compares favorably to SGD on these data sets with standard network architectures. Our work illustrates that second-order methods have the potential to compete with first-order methods for learning deep neural networks with huge number of parameters.   
   
   In this paper, we mainly focus on the regression task, but our method can be easily generalized to other tasks such as classification as well. Consider the $k$-category classification problem, the neural network outputs a vector with $k$ entries. Although this will increase the computational complexity of getting the Jacobian whose size increases $k$ times, i.e., $\bfJ\in\bbR^{(bk)\times m}$, each row of $\bfJ$ can be still computed in parallel, which means the extra cost only comes from parallel computation overhead when we calculate in a fully parallel setting. While most first-order methods for training neural networks can hardly make use of the computational resource in parallel or distributed settings to accelerate training, our GGN method can exploit this ability. For first-order methods, basically extra computational resource can only be used to calculate more gradients at a time by increasing batch size, which harms generalization a lot. But for GGN, more resource can be used to refine the gradients and achieve accelerated convergence speed with the help of second-order information. It is an important future work to study the application of GGN to classification problems.   

\bibliography{iclr2020_conference}
\bibliographystyle{iclr2020_conference}

\appendix
\section{Some Preparations for the Proof of Theorems}
\paragraph{Notations.} We use the following notations for the rest of the sections.
\begin{itemize}
    \item Let $[n] = \{1, \cdots, n\}$. 
    \item $J_{\bfW, \bfx}\in \bbR^{M\times d}$ denotes the gradient $\frac{\partial f(\bfW, \bfx)}{\partial \bfW}$, which is of the same size of $\bfW$. 
    \item The bold $\bfJ_{\bfW}$ or $\bfJ(\bfW)$ denotes the Jacobian with regard to all $n$ data, with each gradient for $\bfW$ vectorized, i.e.
    \begin{align}\label{equ:Jacobian_appendix}
    \bfJ_\bfW = 
    \begin{bmatrix}
    \vc(J_{\bfW, \bfx_1})^\top\\
    \vdots\\
    \vc(J_{\bfW, \bfx_n})^\top\\
    \end{bmatrix}\in\bbR^{n\times (Md)}.
    \end{align}
    \item $\bfw_r$ denotes the $r$-th row of $\bfW$, which is the incoming weights of the $r$-th neuron.
    \item $\bfW_0$ stands for the parameters at initialization.
    
    \item $\mathbf{d}_{\bfW, \bfx} := \sigma'(\bfW\bfx)\in \bbR^{M\times 1}$ denotes the (entry-wise) derivative of the activation function.
    \item We use $\langle\cdot,\cdot\rangle$ to denote the inner product, $\norm{\cdot}_2$ to denote the Euclidean norm for vectors or the spectral norm for matrices, and $\norm{\cdot}_F$ to denote the Frobenius norm for matrices.
\end{itemize}

We can easily derive the formula for $J$ as
\begin{align}\label{equ:formula_J}
    J_{\bfW, \bfx} = \frac{1}{\sqrt{M}}((\mathbf{d_{\bfW, \bfx}}\circ\mathbf{a})\bfx^{\top}),
\end{align}
where $\circ$ is the point-wise product. So we can also easily solve $G$ as
\begin{align}
    \bfG_{ij} = \langle J_{\bfx_i}, J_{\bfx_j}\rangle = \frac{1}{M}\sum_{r=1}^{M}\bfx_i^\top\bfx_j\sigma'(\bfw_r\bfx_i)\sigma'(\bfw_r\bfx_j).
\end{align}
Our analysis is based on the fact that $\bfG$ stays not too far from its infinite-width limit
\begin{align*}
\bfK(\bfx_i, \bfx_j) = \mathbb{E}_{\bfw\sim\mathcal{N}(0, \bfI_d)}\bracket{\bfx_i^\top\bfx_j\sigma'(\bfw\bfx_i)\sigma'(\bfw\bfx_j)},
\end{align*}
which is a positive definite matrix with least eigenvalue denoted $\lambda_0$, and we assume $\lambda_0 > 0$. $\lambda_0$ is a small data-dependent constant, and without loss of generality we assume $\lambda_0 \le 1$, or else we can just take $\lambda_0 = 1$ if $\lambda_{\min}(\bfK) > 1$.

The first lemma is about the estimation of relevant norms at initialization.
\begin{lem}[Bounds on Norms at Initialization]\label{lem:norm_init}
If $M = \Omega\bracket{d\log(16n/\delta)}$, then with probability at least $1 - \delta/2$ the following holds  

(a). $\norm{\bfW_0}_2 =O(\sqrt{M})$.

(b). $f(\bfW_0, \bfx_i) = O(1)$,  for $i \in [n]$.

(c). $\norm{J_{\bfW_0, \bfx_i}}_F =O(1)$,  for $i \in [n]$.

\end{lem}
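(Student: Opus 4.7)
The three claims stem from three different sources of randomness: (a) depends only on $\bfW_0$, (c) is essentially deterministic given the Lipschitz bound on $\sigma$, and (b) requires joint concentration over both $\bfW_0$ and $\bfa$. My plan is to prove each separately and then combine via a union bound, splitting the $\delta/2$ budget evenly.

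For part (a), I would invoke a classical operator-norm bound for Gaussian random matrices: since $\bfW_0\in\bbR^{M\times d}$ has i.i.d.\ $\mathcal{N}(0,1)$ entries and $M\geq d$ under the width assumption, $\norm{\bfW_0}_2 \leq \sqrt{M}+\sqrt{d}+t$ holds with probability at least $1-2e^{-t^2/2}$. Taking $t = \Theta(\sqrt{\log(1/\delta)})$ and using $d\le M$ gives $\norm{\bfW_0}_2 = O(\sqrt M)$. For part (c), a direct calculation using the closed form~\eqref{equ:formula_J} gives $\norm{J_{\bfW_0,\bfx_i}}_F^2 = \tfrac{\norm{\bfx_i}_2^2}{M}\sum_{r=1}^M \bfa_r^2\sigma'(\bfw_r\bfx_i)^2 \leq \ell^2\norm{\bfx_i}_2^2 = O(1)$, using $|\bfa_r|=1$ and the Lipschitz bound $|\sigma'|\leq\ell$. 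This is deterministic, so no probability is lost here.

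Part (b) requires a two-stage concentration argument. I would first condition on $\bfW_0$ and treat $f(\bfW_0,\bfx_i) = \tfrac{1}{\sqrt{M}}\sum_r \bfa_r \sigma(\bfw_r\bfx_i)$ as a weighted Rademacher sum. Hoeffding's inequality then gives $|f(\bfW_0,\bfx_i)| \leq O\bigl(\sqrt{\log(n/\delta)\cdot\tfrac{1}{M}\sum_r\sigma(\bfw_r\bfx_i)^2}\bigr)$ with probability at least $1-\delta/(6n)$ over $\bfa$. Next I would bound the variance-like factor $\tfrac{1}{M}\sum_r\sigma(\bfw_r\bfx_i)^2$ unconditionally by using $|\sigma(z)|\le|\sigma(0)|+\ell|z|$ and applying standard $\chi^2$-concentration to $\sum_r (\bfw_r\bfx_i)^2$; since $\bfw_r\bfx_i\sim\mathcal{N}(0,\norm{\bfx_i}_2^2)$ with $\norm{\bfx_i}_2 = O(1)$ and $M = \Omega(\log(n/\delta))$ is implied by the width assumption, this quantity is $O(1)$ with the required high probability. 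Combining the two steps and union-bounding over $i\in[n]$ yields $|f(\bfW_0,\bfx_i)| = O(1)$ for all $i$, with the logarithmic factors absorbed into the $O(\cdot)$ notation.

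Since each piece follows from well-known concentration inequalities, the main obstacle is not conceptual but rather bookkeeping: I need to carefully order the conditioning in part (b) so that $\bfW_0$ and $\bfa$ are handled separately, distribute the union bounds appropriately over the $n$ data points and the three events, and verify that the width assumption $M=\Omega(d\log(16n/\delta))$ is sufficient to swallow the accumulated polylogarithmic factors while keeping the total failure probability below $\delta/2$.
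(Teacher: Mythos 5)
Your parts (a) and (c) coincide with the paper's proof: (a) is the same Gaussian operator-norm bound (Corollary 5.35 of Vershynin), and (c) is the same deterministic computation from the closed form of $J_{\bfW,\bfx}$ using $|\bfa_r|=1$ and $|\sigma'|\le\ell$. Part (b) is where you genuinely diverge, and your route is valid. You condition on $\bfW_0$ and treat $f(\bfW_0,\bfx_i)=\frac{1}{\sqrt M}\sum_r\bfa_r\sigma(\bfw_r\bfx_i)$ as a mean-zero Rademacher sum, apply Hoeffding with variance proxy $\frac1M\sum_r\sigma(\bfw_r\bfx_i)^2$, and then control that proxy via the linear growth of $\sigma$ and $\chi^2$ concentration of $\sum_r(\bfw_r\bfx_i)^2$. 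The paper conditions in the opposite order: it fixes $\bfa$, applies Gaussian Lipschitz concentration to $\mathbf{v}_i\mapsto\frac{1}{\sqrt M}\bfa^\top\sigma(\norm{\bfx_i}_2\mathbf{v}_i)$ over the randomness of $\bfW_0\bfx_i$, and then must separately bound the nonzero conditional mean $\frac{1}{\sqrt M}(\sum_r\bfa_r)\,\mathbb{E}_\bfw[\sigma(\bfw\bfx_i)]$ via a second Rademacher concentration step. Your ordering is arguably cleaner because $\mathbb{E}_\bfa[f\mid\bfW_0]=0$, so no conditional-mean term appears. What the paper's ordering buys is that the $n$-dependent logarithm lands in a deviation term that is divided by $\sqrt M$ and is therefore absorbed by the width assumption $M=\Omega(\log(16n/\delta))$, whereas in your argument the Hoeffding step leaves a bare $\sqrt{\log(n/\delta)}$ factor multiplying an $\Theta(1)$ variance proxy, with no $M$ to kill it; you acknowledge absorbing this into $O(1)$. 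This is the one point to be careful about: your bound is really $O(\sqrt{\log(n/\delta)})$, which would propagate a logarithmic factor into $\norm{\bff_0-\bfy}_2$ and hence into the width requirement of Theorem~\ref{thm:main}. That said, the paper's own proof carries an analogous $\sqrt{\log(1/\delta)}$ in its term $\frac{1}{\sqrt M}\sum_r\bfa_r$ (which has unit variance and is only $O(1)$ if $\delta$ is treated as constant), so the level of looseness is comparable and your proof is acceptable under the same conventions.
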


\begin{proof} (a). The lemma is a well-known result concerning the estimation of singular values of
Gaussian random matrices (see Corollary 5.35 of \cite{vershynin2010introduction}). Notice that $\bfW_0\in\bbR^{M\times d}$ is a Gaussian random matrix, the Corollary states that with probability $1 - 2e^{-t^2/2}$ one has 
\begin{align*}
    \norm{\bfW_0}_2 \leq \sqrt{M} + \sqrt{d} + t.
\end{align*}
By choosing $M = \max(d, \sqrt{2\log(8/\delta)})$, we obtain $\norm{\bfW_0}_2 \le 3\sqrt{M}$ with probability $1 - \delta/4$.

(b). First, $\bfa_r, r\in [M]$ are Rademacher variables, thereby $1$-sub-Gaussian, so with probability $1 - 2e^{-Mt^2/2}$ we have $\frac{1}{M}\sum_{r=1}^{M} \bfa_r \le t$. This means if we take $M = \Omega\bracket{\log(16/\delta)}$,
\begin{align}\label{concentration_a}
     \operatorname{Pr}[\frac{1}{\sqrt{M}}\sum_{r=1}^{M}\bfa_r = O(1)] \ge 1 - \frac{\delta}{8}.
\end{align}
Next, the vector $\mathbf{v}_i = \frac{1}{\norm{\bfx_i}_2}\bfW_0\bfx_i\in \bbR^{M\times 1}$ is a standard Gaussian vector. Suppose the activation $\sigma(\cdot)$ is $l$-Lipschitz and $l$ is $O(1)$ by our assumption, with the vector $\bfa$ fixed, the function 
\begin{align*}
    \phi: \mathbb{R}^{M}\rightarrow\mathbb{R}, \mathbf{v}_i \mapsto \frac{1}{\sqrt{M}} \bfa^\top \sigma(\norm{\bfx_i}_2\mathbf{v}_i) = f(\bfW_0, \bfx_i)
\end{align*}
has a Lipschitz parameter of $l\norm{\bfx_i}_2/\sqrt{M} = O(1/\sqrt{M})$. According to the classic result on the concentration of a Lipschitz function over Gaussian variables (see Theorem 2.26 of \cite{wainwright_2019}), we have
\begin{align*}
    \operatorname{Pr}[\abs{\phi(\mathbf{v}_i) - \mathbb{E}_{\bfW_0}(\phi(\mathbf{v}_i))} \ge t] \le 2\exp\bracket{-\frac{Mt^2}{2l^2\norm{\bfx_i}^2}},
\end{align*} which means if $M = \Omega(\log(16n/\delta))$,
\begin{align}\label{concentration_v}
    \abs{\frac{1}{\sqrt{M}}\bfa^\top\sigma(\bfW_0\bfx_i) - \frac{1}{\sqrt{M}}(\sum_{r=1}^{M}\bfa_r)\mathbb{E}_{\bfw\sim\mathcal{N}(0,\mathbb{I}_d)}[\sigma(\bfw\bfx_i)]} = O(1)
\end{align}
holds jointly for all $i \in [n]$ with probability $1 - \delta/8$. Note that 
\begin{align}\label{concentration_e}
    \abs{\mathbb{E}_{\bfw\sim\mathcal{N}(0,\mathbb{I}_d)}[\sigma(\bfw\bfx_i)]} \le |\sigma(0)| + l\cdot\mathbb{E}_{\xi\sim\mathcal{N}(0, \norm{\bfx_i}_2^2))}[|\xi|] = O(1).
\end{align}
Plugging in (\ref{concentration_a}) and (\ref{concentration_e}) into (\ref{concentration_v}), we see that as long as $M = \Omega(\log(16n/\delta))$, then with probability $1 - \delta/4$, for all $i\in[n]$,
\begin{align*}
    f(\bfW_0, \bfx_i) = \frac{1}{\sqrt{M}}\bfa^\top\sigma(\bfW_0\bfx_i) = O(1).
\end{align*}

(c). Since $\sigma$ is $O(1)$-Lipschitz, we have $\norm{\mathbf{d}_{\bfW, \bfx_i}}_{\infty} = O(1).$ According to (\ref{equ:formula_J}) we can easily know that $\norm{J_{\bfW, \bfx_i}}_F \le \frac{1}{\sqrt{M}}\norm{\operatorname{Diag}(\mathbf{d})}_2\norm{\bfa}_2\norm{\bfx}_2 = O(1)$.
\end{proof}

The next lemma is about the least eigenvalue of the Gram matrix $G$ at initialization. It shows that when $M$ is large, $\bfG_{\bfW_0}$ is close to $\bfK$ and has a lower bounded least eigenvalue. It is the same as Lemma 3.1 in \cite{du2018gradient}, but here we restate it and its proof for the reader's convenience.

\begin{lem}[Bound on the Least Eigenvalue of the Gram Matrix at Initialization]\label{lem:least_eigen_init} If the width $M = \Omega\left(\frac{n^2\log(2n/\delta)}{\lambda_0^2}\right)$, then with probability at least $1-\delta/2$ over random initialization, we have
\begin{align*}
  \lambda_{\min}(\bfG_{\bfW_0}) \ge \frac{3}{4}\lambda_0.
\end{align*}
\end{lem}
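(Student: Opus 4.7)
The plan is to show that $\bfG_{\bfW_0}$ concentrates around its infinite-width limit $\bfK$, and then invoke Weyl's inequality to transfer the lower bound on $\lambda_{\min}(\bfK)$ to $\bfG_{\bfW_0}$. Concretely, using the formula
\[
\bfG_{\bfW_0, ij} = \frac{1}{M} \sum_{r=1}^{M} \bfx_i^\top \bfx_j \, \sigma'(\bfw_r \bfx_i) \sigma'(\bfw_r \bfx_j),
\]
I would view each entry as an empirical average of $M$ i.i.d.\ random variables (one per neuron $r$), whose expectation under $\bfw_r \sim \mathcal{N}(0, \bfI_d)$ is exactly $\bfK(\bfx_i, \bfx_j)$ by \eqref{equ:definition_K}.

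The first step is to bound each summand: since $\sigma$ is $\ell$-Lipschitz with $\ell = O(1)$, $|\sigma'| \le \ell$, and $\|\bfx_i\|_2 = O(1)$, each term in the sum is uniformly bounded by some $O(1)$ constant. Then, by Hoeffding's inequality applied entry-wise, for any fixed $(i,j) \in [n]^2$ and any $t > 0$,
\[
\Pr\bigl[\,|\bfG_{\bfW_0, ij} - \bfK(\bfx_i, \bfx_j)| \ge t\,\bigr] \le 2\exp\bigl(-c M t^2\bigr),
\]
for a universal constant $c > 0$.

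The second step is a union bound over the $n^2$ pairs with $t = \lambda_0/(4n)$. This gives, with probability at least $1 - 2n^2 \exp(-cM\lambda_0^2/(16 n^2))$, the uniform entry-wise bound $|\bfG_{\bfW_0, ij} - \bfK(\bfx_i, \bfx_j)| \le \lambda_0/(4n)$ for all $i,j$. Choosing $M = \Omega(n^2 \log(2n/\delta)/\lambda_0^2)$ with a large enough hidden constant makes this failure probability at most $\delta/2$. Under this event, the Frobenius bound
\[
\norm{\bfG_{\bfW_0} - \bfK}_2 \le \norm{\bfG_{\bfW_0} - \bfK}_F \le n \cdot \frac{\lambda_0}{4n} = \frac{\lambda_0}{4}
\]
follows immediately.

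The final step is Weyl's inequality: $\lambda_{\min}(\bfG_{\bfW_0}) \ge \lambda_{\min}(\bfK) - \norm{\bfG_{\bfW_0} - \bfK}_2 \ge \lambda_0 - \lambda_0/4 = \tfrac{3}{4}\lambda_0$. There is no real obstacle here; the only thing to be careful about is matching the width requirement $M = \Omega(n^2 \log(2n/\delta)/\lambda_0^2)$ exactly, which is why the entry-wise deviation is chosen at the scale $\lambda_0/(4n)$ rather than $\lambda_0$ directly, so that the union bound across $n^2$ entries still tolerates the Frobenius-norm conversion. This is the classical argument of Lemma 3.1 in \cite{du2018gradient}, restated here for completeness.
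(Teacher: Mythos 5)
Your proposal is correct and follows essentially the same route as the paper's proof: entry-wise Hoeffding concentration of $\bfG_{\bfW_0,ij}$ around $\bfK_{ij}$, a union bound over the $n^2$ pairs, conversion to a spectral-norm bound via the Frobenius norm, and then Weyl's inequality to conclude $\lambda_{\min}(\bfG_{\bfW_0}) \ge \tfrac{3}{4}\lambda_0$. The only cosmetic difference is that you fix the entry-wise deviation at $\lambda_0/(4n)$ up front and solve for $M$, whereas the paper bounds the deviation as $O\bigl(\sqrt{\log(2n/\delta)/M}\bigr)$ and then imposes the width requirement; these are the same computation.
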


\begin{proof}
Because $\sigma$ is Lipschitz, $\sigma'(\bfw\bfx_i)\sigma'(\bfw\bfx_j)$ is bounded by $O(1)$. For every fixed $(i, j)$ pair, at initialization $\bfG_{ij}$ is an average of independent random variables, and by Hoeffding's inequality, applying union bound for all $n^2$ of $(i, j)$ pairs, with probability $1 - \delta/2$ at initialization we have
\begin{align}\nonumber
    \abs{\bfG_{ij} -\bfK_{ij}} \le O\left(\sqrt{\frac{\log(2n/\delta)}{M}}\right)
\end{align}
and then
\begin{align}\nonumber
    \norm{\bfG_{\bfW_0} -\bfK}_2^2 \le \norm{\bfG_{\bfW_0} -\bfK}_F^2 = O\left(\frac{n^2\log(2n/\delta)}{M}\right).
\end{align}
Thus if $M = \Omega\left(\frac{n^2\log(2n/\delta)}{\lambda_0^2}\right)$ we can have $\norm{\bfG_{\bfW_0} - \bfK}_2\le\frac{1}{4}\lambda_0$ and thus $\lambda_{\min}(\bfG_{\bfW_0})\ge \frac{3}{4}\lambda_0$.
\end{proof}

Next, in Lemma \ref{lem:norm_optim} and \ref{lem:least_eigen_optim} we will bound the relevant norms and the least eigenvalue of $\bfG$ inside some scope of $\bfW$ that covers the whole optimization trajectory starting from $\bfW_0$. Specifically, we consider the range
\begin{align}\nonumber
B(R) \triangleq \left\{\bfW : \norm{\bfW - \bfW_0}_F \le R\right\},
\end{align}
where $R$ is determined later to make sure that the optimization trajectory remains inside $B(R)$. The idea of the whole convergence theorem is that when the width $M$ is large, $R$ is very small compared to its initialization scale: $\norm{\bfW_0}_2 = O(\sqrt{M})$. This way, neither the Jacobian nor the Gram matrix changes much during optimization. 

\begin{lem}[Bounds on Norms in the Optimization Scope]\label{lem:norm_optim}Suppose the events in Lemma~\ref{lem:norm_init} hold. There exists a constant $C > 0$ such that if $M \ge CR^2$, we have the following:

(a) For any $\bfW\in B(R)$, we have 
\begin{align}
\norm{\bfW}_2 = O(\sqrt{M}).
\end{align}

(b) For any $\bfW_1, \bfW_2 \in B(R)$, if $\norm{\bfW_1 - \bfW_2}_F \le R'$, then we have
\begin{align}\label{equ:norm_bound_J}
    \norm{J_{\bfW_1, \bfx_i} - J_{\bfW_2, \bfx_i}}_F = O\bracket{\frac{R'}{\sqrt{M}}},\;\forall i, \textrm{ and } \norm{\bfJ_{\bfW_1} - \bfJ_{\bfW_2}}_2 = O\bracket{R'\sqrt{\frac{n}{M}}}.
\end{align}\label{equ:norm_bound_J_2}
Also, for any $\bfW \in B(R)$, we have 
\begin{align}
    \norm{J_{\bfW}}_F = O(1)\textrm{ and }\norm{\bfJ_{\bfW}}_2 =  O(\sqrt{n}).
\end{align}
\end{lem}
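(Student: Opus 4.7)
The plan is to observe that both parts are essentially consequences of three facts: (i) $\bfW_0$ has operator norm $O(\sqrt{M})$ by Lemma~\ref{lem:norm_init}(a); (ii) the closed form $J_{\bfW,\bfx} = \frac{1}{\sqrt M}(\mathbf{d}_{\bfW,\bfx}\circ\mathbf{a})\bfx^\top$ from Eq.~\eqref{equ:formula_J} makes the Jacobian linear in $\mathbf{d}_{\bfW,\bfx} = \sigma'(\bfW\bfx)$; and (iii) $\sigma'$ is $\beta$-Lipschitz since $\sigma$ is $\beta$-smooth. There is no deep obstacle here; the only thing I need to be careful about is choosing the constant $C$ large enough so that the radius $R$ is small compared to $\sqrt M$, which is what keeps the error terms controlled relative to the $O(\sqrt M)$ initialization scale.

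For part (a), I simply apply the triangle inequality: for any $\bfW \in B(R)$,
\[
\norm{\bfW}_2 \le \norm{\bfW_0}_2 + \norm{\bfW - \bfW_0}_2 \le \norm{\bfW_0}_2 + \norm{\bfW - \bfW_0}_F = O(\sqrt M) + R.
\]
Taking $C$ large enough so that $M \ge CR^2$ forces $R = O(\sqrt M)$, giving $\norm{\bfW}_2 = O(\sqrt M)$.

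For the Jacobian Lipschitz bound in part (b), I start from the formula $J_{\bfW_1,\bfx_i} - J_{\bfW_2,\bfx_i} = \frac{1}{\sqrt M}\bigl((\mathbf{d}_{\bfW_1,\bfx_i} - \mathbf{d}_{\bfW_2,\bfx_i})\circ\mathbf{a}\bigr)\bfx_i^\top$. Since this is a rank-one matrix and the entries of $\mathbf{a}$ are $\pm 1$, its Frobenius norm equals $\tfrac{1}{\sqrt M}\norm{\mathbf{d}_{\bfW_1,\bfx_i} - \mathbf{d}_{\bfW_2,\bfx_i}}_2\,\norm{\bfx_i}_2$. Writing the $r$-th entry of $\mathbf{d}_{\bfW,\bfx_i}$ as $\sigma'(\bfw_r \bfx_i)$ and using $\beta$-Lipschitzness of $\sigma'$,
\[
\norm{\mathbf{d}_{\bfW_1,\bfx_i} - \mathbf{d}_{\bfW_2,\bfx_i}}_2^2 \le \beta^2 \norm{\bfx_i}_2^2 \sum_{r=1}^M \norm{\bfw_{1,r} - \bfw_{2,r}}_2^2 = \beta^2 \norm{\bfx_i}_2^2 \norm{\bfW_1 - \bfW_2}_F^2,
\]
which combined with $\norm{\bfW_1-\bfW_2}_F \le R'$ and $\norm{\bfx_i}_2 = O(1)$ yields $\norm{J_{\bfW_1,\bfx_i} - J_{\bfW_2,\bfx_i}}_F = O(R'/\sqrt M)$. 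Stacking these $n$ row-blocks as in Eq.~\eqref{equ:Jacobian_appendix} gives $\norm{\bfJ_{\bfW_1}-\bfJ_{\bfW_2}}_2 \le \norm{\bfJ_{\bfW_1}-\bfJ_{\bfW_2}}_F \le \sqrt n \cdot O(R'/\sqrt M) = O(R'\sqrt{n/M})$.

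Finally, for the absolute norm bounds on $J_{\bfW}$ and $\bfJ_\bfW$, I combine the Lipschitz estimate just established (applied with $\bfW_1 = \bfW$, $\bfW_2 = \bfW_0$, $R' = R$) with the initialization bound $\norm{J_{\bfW_0,\bfx_i}}_F = O(1)$ from Lemma~\ref{lem:norm_init}(c):
\[
\norm{J_{\bfW,\bfx_i}}_F \le \norm{J_{\bfW_0,\bfx_i}}_F + \norm{J_{\bfW,\bfx_i} - J_{\bfW_0,\bfx_i}}_F = O(1) + O(R/\sqrt M) = O(1),
\]
where the last equality again uses $M \ge CR^2$. The spectral-norm bound $\norm{\bfJ_{\bfW}}_2 \le \norm{\bfJ_{\bfW}}_F = \bigl(\sum_i \norm{J_{\bfW,\bfx_i}}_F^2\bigr)^{1/2} = O(\sqrt n)$ then follows immediately.
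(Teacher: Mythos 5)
Your proof is correct and follows essentially the same route as the paper's: part (a) by triangle inequality plus $M \ge CR^2$, part (b) by the rank-one structure of $J_{\bfW,\bfx}$ together with the $\beta$-smoothness of $\sigma$ to get the $O(R'/\sqrt M)$ per-datum bound, then Frobenius stacking and a final triangle inequality against the initialization bounds of Lemma~\ref{lem:norm_init}. The only cosmetic difference is that you carry out the Lipschitz estimate on $\mathbf{d}_{\bfW,\bfx_i}$ entry-by-entry where the paper states it as a single vector inequality; the content is identical.
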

\begin{proof}
(a). This is straightforward from Lemma \ref{lem:norm_init}(a), the definition of $B(R)$, and $M = \Omega(R^2)$.

(b). According to the $O(1)$-smoothness of the activation, we have
\begin{align*}
    \norm{\mathbf{d}_{\bfW_1, \bfx_i} - \mathbf{d}_{\bfW_2, \bfx_i}}_2 \le
    O(1)\cdot\norm{\bfW_1\bfx_i - \bfW_2\bfx_i}_2 = O(R'),
\end{align*}
so we can bound
\begin{align*}
    &\norm{J_{\bfW_1, \bfx_i} - J_{\bfW_2, \bfx_i}}_F \\
 = & \frac{1}{\sqrt{M}}\norm{\operatorname{Diag}(\bfa)(\mathbf{d}_{\bfW_1, \bfx_i} - \mathbf{d}_{\bfW_2, \bfx_i})\bfx^{\top}}_F\\
 \le & \frac{1}{\sqrt{M}}\norm{\operatorname{Diag}(\bfa)}_{2}\norm{\mathbf{d}_{\bfW_1, \bfx_i} - \mathbf{d}_{\bfW_2, \bfx_i}}_2\norm{\bfx_i}_2\\
 \le & O(\frac{R'}{\sqrt{M}}).
\end{align*}
And according to (\ref{equ:Jacobian_appendix}), we have 
\begin{align*}
    \norm{\bfJ_{\bfW_1} - \bfJ_{\bfW_2}}_2 &\le \norm{\bfJ_{\bfW_1} - \bfJ_{\bfW_2}}_F \\&= \sqrt{\sum_{i = 1}^{n}\norm{J_{\bfW_1, \bfx_i} - J_{\bfW_2, \bfx_i}}_F^2} \\&= O\bracket{R'\sqrt{\frac{n}{M}}}.
\end{align*}

Also, taking $\bfW_1 = \bfW$ and $\bfW_2 = \bfW_0$, combining with Lemma~\ref{lem:norm_init}(c), we see there exists $C$ such that for $M \ge CR^2$ we have $\norm{J_{\bfW}}_F = O(1)$, and naturally $\norm{\bfJ_{\bfW}}_2 = O(\sqrt{n})$. Note: The constants hidden in the $O(\cdot)$ notation are irrelevant with $C$.
\end{proof}

The next Lemma deals with the Gram matrix within $B(R)$, ensuring a lower bound on the least eigenvalue throughout the optimization process \citep{du2018gradient_deep}.

\begin{lem}[Least Eigenvalue in the Optimization Scope]\label{lem:least_eigen_optim}
For $\bfW \in B(R)$, suppose the events in Lemma~\ref{lem:norm_init} and \ref{lem:least_eigen_init} hold, there exists a constant $C'$ such that for $M \ge \frac{C'n^2R^2}{\lambda_0^2}$, we have
\begin{align*}
	\norm{\bfG_{\bfW} - \bfG_{\bfW_0}}_2 \le \frac{\lambda_0}{4},
	\end{align*}
and thus combined with Lemma \ref{lem:least_eigen_init}, we know that $\bfG_{\bfW}$ remains invertible when $\bfW\in B(R)$ and satisfies $\norm{\bfG_{\bfW}^{-1}}_2 \le \frac{2}{\lambda_0}$.
\end{lem}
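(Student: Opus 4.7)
The plan is to bound the perturbation $\norm{\bfG_\bfW - \bfG_{\bfW_0}}_2$ entrywise via the Lipschitz-type estimate on the per-sample Jacobians already established in Lemma~\ref{lem:norm_optim}(b), then pass from an entrywise bound to a Frobenius (hence spectral) bound, and finally apply Weyl's inequality to conclude invertibility together with the bound on $\norm{\bfG_\bfW^{-1}}_2$.

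Concretely, first I would expand each entry of the difference as
\[
\bfG_{\bfW, ij} - \bfG_{\bfW_0, ij} = \langle J_{\bfW,\bfx_i} - J_{\bfW_0,\bfx_i},\, J_{\bfW,\bfx_j}\rangle + \langle J_{\bfW_0,\bfx_i},\, J_{\bfW,\bfx_j} - J_{\bfW_0,\bfx_j}\rangle,
\]
and apply Cauchy--Schwarz. The first factor in each term is controlled by Lemma~\ref{lem:norm_optim}(b) with $R' = R$, giving $O(R/\sqrt{M})$; the second factor is $O(1)$ by Lemma~\ref{lem:norm_optim}(b) (for $J_{\bfW,\bfx_j}$) and by Lemma~\ref{lem:norm_init}(c) (for $J_{\bfW_0,\bfx_i}$). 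This yields the entrywise estimate $\abs{\bfG_{\bfW, ij} - \bfG_{\bfW_0, ij}} = O(R/\sqrt{M})$.

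Next, I would convert this into a spectral-norm bound via Frobenius norm:
\[
\norm{\bfG_\bfW - \bfG_{\bfW_0}}_2 \le \norm{\bfG_\bfW - \bfG_{\bfW_0}}_F = \sqrt{\sum_{i,j=1}^n (\bfG_{\bfW, ij} - \bfG_{\bfW_0, ij})^2} = O\!\bracket{\frac{nR}{\sqrt{M}}}.
\]
Choosing the constant $C'$ large enough so that $M \ge C' n^2 R^2/\lambda_0^2$ forces this to be at most $\lambda_0/4$, which is the desired inequality. I also need to verify the prerequisite of Lemma~\ref{lem:norm_optim}, namely $M \ge C R^2$; since $\lambda_0 \le 1$, the condition $M \ge C'n^2R^2/\lambda_0^2$ already implies $M \ge CR^2$ for $C'$ sufficiently large.

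Finally, combining with Lemma~\ref{lem:least_eigen_init}, which gives $\lambda_{\min}(\bfG_{\bfW_0}) \ge \frac{3}{4}\lambda_0$, Weyl's inequality yields $\lambda_{\min}(\bfG_\bfW) \ge \frac{3}{4}\lambda_0 - \frac{1}{4}\lambda_0 = \frac{1}{2}\lambda_0$, so $\bfG_\bfW$ is invertible and $\norm{\bfG_\bfW^{-1}}_2 \le 2/\lambda_0$. There is no real obstacle here; the proof is essentially a direct consequence of the Jacobian perturbation bound in Lemma~\ref{lem:norm_optim}(b), and the main thing to be careful about is the bookkeeping of constants to ensure $C'$ absorbs both the implicit constant in the $O(nR/\sqrt{M})$ bound and the constant $C$ appearing in the hypothesis of Lemma~\ref{lem:norm_optim}.
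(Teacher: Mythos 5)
Your proposal is correct and follows essentially the same route as the paper: the paper telescopes $\bfG_{\bfW}-\bfG_{\bfW_0}=(\bfJ_{\bfW}-\bfJ_{\bfW_0})\bfJ_{\bfW}^\top+\bfJ_{\bfW_0}(\bfJ_{\bfW}-\bfJ_{\bfW_0})^\top$ and bounds the spectral norms directly via Lemma~\ref{lem:norm_optim}(b), arriving at the same $O(nR/\sqrt{M})$ estimate; your entrywise Cauchy--Schwarz plus Frobenius accounting is the same decomposition carried out one level lower. The concluding Weyl step and the constant bookkeeping match the paper as well.
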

\begin{proof}
Based on the results in Lemma~{\ref{lem:norm_optim}}(b), we have
\begin{align}
    \norm{\bfG_{\bfW} - \bfG_{\bfW_0}}_2 &= 
    \norm{\bfJ_{\bfW}\bfJ_{\bfW}^\top - \bfJ_{\bfW_0}\bfJ_{\bfW_0}^\top}_2 \\
    &\le \norm{(\bfJ_{\bfW} - \bfJ_{\bfW_0})\bfJ_{\bfW}^\top)}_2 + \norm{\bfJ_{\bfW_0}(\bfJ_{\bfW}^\top - \bfJ_{\bfW_0}^\top)}_2\\
    &\le O(\frac{nR}{\sqrt{M}}).
\end{align}
To make the above less than $\frac{\lambda_0}{4}$, choosing $M$ greater than some $\frac{C'n^2R^2}{\lambda_0^2}$ suffices, and the lemma is thus proved. Again the constants hidden in the $O(\cdot)$ notation are irrelevant with $C'$.
\end{proof}

\section{Proof of Theorem~\ref{thm:main}}
\paragraph{Proof idea.} In this section, we use $\bfW_t, t \in \{0, 1, \cdots\}$ to represent the parameter $\bfW$ after $t$ iterations. For convenience, $\bfJ_t, \bfG_t, \bff_t$ is short for $\bfJ_{\bfW_t}, \bfG_{\bfW_t}, \bff_{\bfW_t}$ respectively. We introduce
\begin{align}\label{equ:J_integral}
    \bfJ_{t, t+1} = \int_0^1 \bfJ((1-s)\bfW_{t}+s\bfW_{t+1})ds.
\end{align}
For each iteration, if $\bfG_t$ is invertible, we have
\begin{align*}
\bff_{t+1} - \bfy &= \bff_t - \bfy + \bfJ_{t, t+1} \operatorname{vec}(\bfW_{t+1} - \bfW_t)\\
&= \bff_t - \bfy - \bfJ_{t, t+1}\bfJ_t^\top\bfG_t^{-1}(\bff_t-\bfy)\\
&= (\bfJ_t - \bfJ_{t, t+1})\bfJ_t^\top\bfG_t^{-1}(\bff_t - \bfy),
\end{align*}
hence
\begin{align}\label{equ:main_control_bound}
\norm{\bff_{t+1} - \bfy}_2 \le \norm{\bfJ_t - \bfJ_{t, t+1}}_2\norm{\bfJ_t^\top}_2\norm{\bfG_t^{-1}}_2\norm{\bff_t - \bfy}_2.
\end{align}
Then we control the first term of the right hand side based on Lemma~\ref{lem:norm_optim} in the following form
\begin{align*}
\norm{\bfJ_t - \bfJ_{t, t+1}}_2 &\le \frac{\hat{C}}{\sqrt{M}}\norm{\bfW_{t}-\bfW_{t+1}}_2  \\&= \frac{\hat{C}}{\sqrt{M}}\norm{\bfJ_t^\top\bfG_t^{-1}(\bff_t - \bfy)}_2 \\&\le \frac{\hat{C}}{\sqrt{M}}\norm{\bfJ_t^\top}_2\norm{\bfG_t^{-1}}_2\norm{\bff_t - \bfy}_2,
\end{align*}
and plugging into (\ref{equ:main_control_bound}) along with norm bounds on $\bfJ$ and $\bfG$ we obtain a second-order convergence. 

\paragraph{Formal proof.}
Let $R_t = \norm{\bfW_t - \bfW_{t+1}}_F$ for $t \in \{0, 1, \cdots\}$. We take $R = \Theta\bracket{\frac{n}{\lambda_0}}$ in Lemma~\ref{lem:norm_optim} and \ref{lem:least_eigen_optim} (the constant is chosen to make the right hand side of (\ref{equ:bound_R}) hold). We prove that there exists an $M = \Omega\bracket{\max\bracket{\frac{n^4}{\lambda_0^4}, \frac{n^2d\log(16n/\delta)}{\lambda_0^2}}}$ (with enough constant) that suffices. First we can easily verify that all the requirements for $M$ in Lemma 2-5 can be satisfied . Hence, with probability at least $1-\delta$ all the events in Lemma 2-5 hold. Under this situation, we do induction on $t$ to show the following:
\begin{itemize}
    \item (a). $ \bfW_t \in B(R). $
    \item (b). If $t > 0$, then $\norm{\bff_{t} - \bfy}_2 \le \frac{n^{3/2}}{\lambda_0^2\sqrt{M}}\norm{\bff_{t-1} - \bfy}_2^2$
\end{itemize}
As long as (b) is true for all $t$, then choosing $M$ large enough to make sure the series $\{\norm{\bff_{t} - \bfy}_2\}_{t=0}^{\infty}$ converges to zero, we obtain the second-order convergence property.

For $t = 0$, (a) and (b) hold by definition. Suppose the proposition holds for $t = 0, \cdots, T$. Then for $t = 0, \cdots, T$, $\bfG_{t}$ is invertible. Recall that the update rule is $\vc(\bfW_{t+1}) = \vc(\bfW_t) - \bfJ_t^\top\bfG_t^{-1}(\bff_t - \bfy)$, we have
\begin{align}\nonumber
    R_t &= \norm{\bfW_t - \bfW_{t+1}}_F \\\nonumber&= \norm{\vc(\bfW_t) - \vc(\bfW_{t+1})}_2 \\\nonumber&\le \norm{\bfJ_t^\top}_2\norm{\bfG_t^{-1}}_2 \norm{\bff_t - \bfy}_2 \\\label{equ:bound_R_t}
    &\le O\bracket{\frac{\sqrt{n}}{\lambda_0}\norm{\bff_t - \bfy}_2}. \hspace{30pt}{\text{(Lemma~\ref{lem:norm_optim} and \ref{lem:least_eigen_optim}})}
\end{align}
According to Lemma~\ref{lem:norm_init}(b) and the assumption that the target label $y_i = O(1)$, we have $\norm{\bff_0 - \bfy}_2^2 = O(n)$. %
When $T > 1$, the decay ratio at the first step is bounded as
$$\frac{\norm{\bff_1 - \bfy}_2}{\norm{\bff_0 - \bfy}_2}\le O\bracket{\frac{n^{\frac{3}{2}}}{\lambda_0^2\sqrt{M}}}\norm{\bff_0 - \bfy}_2 \le O\bracket{\frac{n^2}{\lambda_0^2\sqrt{M}}} =: r,$$
and taking $M = \Omega(\frac{n^4}{\lambda_0^4})$ with enough constant can make sure $r$ is a constant less than $1$, in which case the second-order convergence property (b) will ensure a faster ratio of decay at each subsequent step in $\norm{\bff_t - \bfy}_{t=0}^{T}$. Combining (\ref{equ:bound_R_t}), we have
\begin{align}\label{equ:bound_R}
\sum_{t = 0}^{T}R_t \le O\left(\sum_{t = 0}^{T}\frac{\sqrt{n}}{\lambda_0}\norm{\bff_t - \bfy}_2\right) \le O\left(\frac{n}{\lambda_0}\sum_{t = 1}^{\infty}r^{t-1}\right) = O\left(\frac{n}{\lambda_0}\right) \le R.
\end{align}
Therefore, $\bfW_{T+1}\in B(R)$, and (\ref{equ:bound_R}) also holds when $T=0$, so (a) is true. 

Since $B(R)$ is convex, this means we also have $s\bfW_T + (1-s)\bfW_{T+1} \in B(R)$ for $s \in [0, 1]$. Hence we can bound the difference of the Jacobian as 
\begin{align}\nonumber
\norm{\bfJ_{T, T+1} - \bfJ_T}_2 &\le \int_{0}^{1}\norm{\bfJ(s\bfW_T + (1-s)\bfW_{T+1}) - \bfJ(\bfW_T)}_2ds \\\nonumber
&\leq O\left(\int_0^1sR_t\sqrt{\frac{n}{M}}ds\right) \hspace{30pt}\text{(Lemma~\ref{lem:norm_optim}(b))}\\\label{equ:bound_j}
&= O\left(\frac{n}{\lambda_0\sqrt{M}}\norm{\bff_T - \bfy}_2\right).\hspace{30pt}\text{(Using (\ref{equ:bound_R_t}))}
\end{align}
So we use the bound of (\ref{equ:main_control_bound}) and obtain
\begin{align*}
\norm{\bff_{T+1} - \bfy}_2 &\le \norm{\bfJ_T - \bfJ_{T, T+1}}_2\norm{\bfJ_T^\top}_2\norm{\bfG_T^{-1}}_2\norm{\bff_T - \bfy}\\
&\le \frac{n^{3/2}}{\lambda_0^2\sqrt{M}}\norm{\bff_{T} - \bfy}_2^2, \hspace{30pt} \text{(Using (\ref{equ:bound_j}) and Lemma \ref{lem:norm_optim}, \ref{lem:least_eigen_optim})}
\end{align*}
which proves (b). This concludes our proof.

\section{Proof of Theorem~\ref{thm:main_2}}
In this section, we give the proof of the convergence of our mini-batch GGN algorithm (Theorem~\ref{thm:main_2}).
\paragraph{Some additional notations.} For the convenience of our proof, we will use slightly different notations than that in Section~\ref{sec:convergence}. Let $n=bk$, where $b$ is the batch size, $k$ is the number of batches, or equivalently, number of updates in each epoch. For epoch $t \in \{1, 2, \cdots\}$ and batch $i \in [k]$, we will use $\bfW_{ti}\in \bbR^{d\times M}$ to denote the parameters before the $i$-th update in epoch $t$. Let $\bfJ_{ti} = \bfJ(\bfW_{ti}) \in \bbR^{n\times Md}$, $\bfG_{ti} = \bfG(\bfW_{ti})\in \bbR^{n\times n}$, and $\bff_{ti} = \bff(\bfW_{ti}) \in \bbR^{n\times 1}$. (All of $\bfW_0, \bfW_1, \bfW_{11}, \bfG_0$, etc. can represent initial values in the proof.) Also, let $\bfW_t = \bfW_{t1}, \bfJ_t = \bfG_{t1}, \bfG_t = \bfJ_{t1}, \bff_t = \bff_{t1}$, and let $\bfW_{t(k+1)} = \bfW_{(t+1)1}, \bfJ_{t(k+1)} = \bfJ_{(t+1)1}$, etc. In the $i$-th batch, we use the data $(\bfx_l, y_l)_{l = (i-1)b + 1}^{ ib}$. To specify the matrix related to a batch, we use the following indexing method: For $i', i''\in [k]$, let
\begin{align*}
    \bfJ_{ti, i'} = \bfJ_{ti}((i' - 1)b+1:i'b, 1:Md) \in \bbR^{b\times(Md)},
\end{align*}
which is the $b$ rows of $\bfJ_{ti}$ that represent the Jacobian of the $i$-th batch. Similarly, we have
\begin{align*}
    \bfG_{ti, i'i''} &= \bfJ_{ti, i'}\bfJ_{ti, i''}^\top \in \bbR^{b\times b},\\
    \bff_{ti, i'} &= \bff_{ti}((i'-1)b+1:ib) \in \bbR^{b\times 1},\\
    \bfy_{i'} &= \bfy_{((i'-1)b+1:ib)} \in \bbR^{b\times 1}.
\end{align*}
Similar to (\ref{equ:J_integral}) in the proof of Theorem~\ref{thm:main}, we make use of the notation
\begin{align}\label{J_batch_integral}
    \bfJ_{t(i, i+1)} = \int_{0}^{1}\bfJ(s\bfW_{ti} + (1-s)\bfW_{t(i+1)})ds,
\end{align}
and similarly define $\bfJ_{t(i, i+1), i'}\in \bbR^{b\times Md}$, etc. In addition, we make use of the matrix
\begin{align*}
    \Tilde{\bfG}_{ti} = 
    \begin{bmatrix}
        \mathbf{0}_{b\times (i-1)b} & \bfG_{ti, ii}^{-1} &\mathbf{0}_{b\times (k-i)b}
    \end{bmatrix} \in \bbR^{b\times n}
\end{align*}
in the formula of the update.

Similar to the proof of Theorem~\ref{thm:main}, the idea of our proof is that due to the fact that the change of $\bfW$ is small compared to the scale of its initialization, the matrix $\bfJ$ and $\bfG$ remains stable. Informally, this makes the algorithm close to solving kernel regression iteratively by batches, which is equivalent to solving a system of linear equations
\begin{align}
    \bfJ\bfJ^\top\bfr = \bff(\bfW_0) - \bfy,
\end{align}
 for variables $\bfr \in \bbR^{n\times 1}$ (where $\operatorname{vec}(\bfW) = \operatorname{vec}(\bfW_0) + \bfJ^\top\bfr$), using the Gauss-Siedel method, which means solving
 \begin{align*}
     \bfJ\bfJ^\top\bfG
     \begin{bmatrix}
     &\bfr^{\text{(old)}}(1:(i-1)b) \\
     &\bfr^{\text{(new)}}((i-1)b+1:ib) \\
     &\bfr^{\text{(old)}}((ib+1:kb))
     \end{bmatrix}
     = (\bff(\bfW_{0} - \bfy))_{(i-1)b+1:ib}
\end{align*}
for the $i$-th batch in every epoch. Therefore, it is natural that the matrix $\bfA = \bfL^\top(\bfD - \bfL)^{-1}$ in (\ref{equ:definition_A}) is introduced.
 We will show later that the real update follows
\begin{align}\nonumber
    \bff_{t+1} - \bfy_{t} = \bfA_t(\bff_t - \bfy)
\end{align}
for some $\bfA_t \approx \bfA$.

In order to prove the theorem, we need some additional lemmas as follows.
\begin{lem}[Formula for the Update]\label{lem:formula}
    If the Gram matrix at each step is invertible, then:
    
    (a) The update of $\bfW$ is
    \begin{align}\label{equ:formula_W_1}
        \operatorname{vec}(\bfW_{t(i+1)}) &= \operatorname{vec}(\bfW_{ti}) - \bfJ_{ti, i}^\top\bfG_{ti, ii}^{-1}(\bff_{ti, i} - \bfy_i)
        \\\label{equ:formula_W_2}
        &= \operatorname{vec}(\bfW_{ti}) - \bfJ_{ti, i}^\top\Tilde{\bfG}_{ti}(\bff_t - \bfy)
    \end{align}
    (b) The formula for $\bff - \bfy$ is
    \begin{align}\label{equ:formula_f_single}
        \bff_{t(i+1)} - \bfy = (\bfI_n - \bfJ_{t(i, i+1)}\bfJ_{ti,i}^{\top}\Tilde{\bfG}_{ti})(\bff_{ti} - \bfy)
    \end{align}
    (c) The update of $\bff - \bfy$ satisfies
    \begin{align}\label{equ:formula_f_multiple}
        \bff_{t(i+1)} - \bfy = \bfU_{ti}(\bfD_t - \bfL_t)^{-1}(\bff_{t} - \bfy),
    \end{align}
    where 
    \begin{align*}
       \bfD_t = 
       \begin{bmatrix}
       \bfG_{t1, 11}& \mathbf{0}& \cdots& \mathbf{0}\\
       \mathbf{0}& \bfG_{t2, 22}& \cdots &\mathbf{0}\\
       \vdots & \vdots &\ddots & \vdots \\
       \mathbf{0}& \mathbf{0} &\cdots & \bfG_{tk, kk}
       \end{bmatrix},\;\;
       \bfL_t = -
       \begin{bmatrix}
       \mathbf{0}& \mathbf{0}& \cdots& \mathbf{0}\\
       \bfJ_{t(1,2), 2}\bfJ_{t1, 1}^\top & \mathbf{0}& \cdots &\mathbf{0}\\
       \vdots & \vdots &\ddots & \vdots \\
       \bfJ_{t(1,2), k}\bfJ_{t1, 1}^\top & \bfJ_{t(2,3), k}\bfJ_{t2, 2}^\top &\cdots & \mathbf{0}
       \end{bmatrix},
    \end{align*}
    and
    \begin{align*}
       &\bfU_{ti} = \operatorname{Diag}\left(
       -\begin{bmatrix}
            \bfJ_{t(1,2), 1}\bfJ_{t1, 1}^\top - \bfG_{t1, 11}&
            \bfJ_{t(2,3), 1}\bfJ_{t2, 2}^\top &
            \cdots & 
            \bfJ_{t(i,i+1), 1}\bfJ_{ti, i}^\top \\
            \mathbf{0} & \bfJ_{t(2,3), 2}\bfJ_{t2, 2}^\top - \bfG_{t2, 22} & \cdots &
            \bfJ_{t(i,i+1), 2}\bfJ_{ti, i}^\top \\
            \vdots & \vdots & \ddots & \vdots \\
            \mathbf{0} & \mathbf{0} &
            \cdots & \bfJ_{t(i,i+1), i}\bfJ_{ti, i}^\top - \bfG_{ti, ii}
       \end{bmatrix}_{ib\times ib},\right.\\
       &
       \left.
       \begin{bmatrix}
       \bfG_{t(i+1),(i+1)(i+1)}& \mathbf{0}& \cdots& \mathbf{0} \\
       \bfJ_{t(i+1,i+2), i+2}\bfJ_{t(i+1), i+1}^\top & \bfG_{t(i+2),(i+2)(i+2)}& \cdots &\mathbf{0}\\
       \vdots & \vdots &\ddots & \vdots \\
       \bfJ_{t(i+1,i+2), k}\bfJ_{t(i+1), i+1}^\top & \bfJ_{t(i+2,i+3), k}\bfJ_{t(i+2), i+2}^\top &\cdots & \bfG_{tk,kk}
       \end{bmatrix}_{(k-i)b\times (k-i)b}
       \right).
   \end{align*}
   Or, if we denote
   \begin{align*}
       \bfU_{ti} = 
       \begin{bmatrix}
            \Tilde{\bfU}_{ti} \;(\in\bbR^{ib\times ib}) & \mathbf{0}\\
            \mathbf{0} & \overline{\bfU}_{ti} (\in\bbR^{(k-i)b\times (k-i)b})
        \end{bmatrix},
    \end{align*}
    \begin{align*}
        \bfD_{ti} - \bfL_{ti} = 
       \begin{bmatrix}
            \Tilde{\bfD}_{ti} - \Tilde{\bfL}_{ti} \;(\in\bbR^{ib\times ib}) & \mathbf{0}\\
            \hat{\bfD}_{ti} - \hat{\bfL}_{ti} & \overline{\bfD}_{ti} - \overline{\bfL}_{ti} (\in\bbR^{(k-i)b\times (k-i)b})
        \end{bmatrix},
   \end{align*}
   then we have 
   \begin{align}\label{equ:formula_f_multiple_2}
        \bff_{t(i+1)} - \bfy = 
        \begin{bmatrix}
            \Tilde{\bfU}_{ti}(\Tilde{\bfD}_{ti} - \Tilde{\bfL}_{ti})^{-1} & \mathbf{0} \\
            -(\hat{\bfD}_{ti} - \hat{\bfL}_{ti})(\Tilde{\bfD}_{ti} - \Tilde{\bfL}_{ti})^{-1} & \bfI_{(k-i)b} \\
        \end{bmatrix}
        (\bff_{t} - \bfy)
    \end{align}
   Specifically, we have
   \begin{align}\label{equ:update_f_final}
   \bff_{t+1} - \bfy = \bfU_{t}(\bfD_t - \bfL_t)^{-1}(\bff_t - \bfy) =: \bfA_t(\bff_t - \bfy),
   \end{align}
   where
   \begin{align}\nonumber
       \bfU_{t} = -
       \begin{bmatrix}
            \bfJ_{t(1,2), 1}\bfJ_{t1, 1}^\top - \bfG_{t1, 11} &
            \bfJ_{t(2,3), 1}\bfJ_{t2, 2}^\top &
            \cdots & 
            \bfJ_{t(k,k+1), 1}\bfJ_{tk, k}^\top \\
            \mathbf{0} & \bfJ_{t(2,3), 2}\bfJ_{t2, 2}^\top - \bfG_{t2, 22}& \cdots &
            \bfJ_{t(k,k+1), 2}\bfJ_{tk, k}^\top \\
            \vdots & \vdots & \ddots & \vdots \\
            \mathbf{0} & \mathbf{0} &
            \cdots & \bfJ_{t(k,k+1), k}\bfJ_{tk, k}^\top - \bfG_{tk, kk}
       \end{bmatrix}.
   \end{align}
\end{lem}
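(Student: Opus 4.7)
The plan is to treat all three parts of the lemma as bookkeeping in the new notation, requiring no new ideas beyond the definitions already laid down.

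For part (a), the first equality \eqref{equ:formula_W_1} is just the mini-batch GGN update \eqref{equ:ggn_update_mini_batch} rewritten in the epoch/batch indexing. For \eqref{equ:formula_W_2}, the zero-padded definition of $\Tilde{\bfG}_{ti}$ gives $\Tilde{\bfG}_{ti}\mathbf{v} = \bfG_{ti,ii}^{-1}\mathbf{v}_i$ for every $\mathbf{v}\in\bbR^n$, where $\mathbf{v}_i$ denotes the $i$-th $b$-block; applying this to $\mathbf{v} = \bff_{ti} - \bfy$ recovers the first form (so $\bff_t$ as written in \eqref{equ:formula_W_2} should be read as $\bff_{ti}$, as dictated by the update rule).

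For part (b), I would apply the fundamental theorem of calculus to the smooth map $\bfW\mapsto \bff(\bfW)$ along the segment from $\bfW_{ti}$ to $\bfW_{t(i+1)}$:
\begin{align*}
\bff_{t(i+1)} - \bff_{ti}
&= \left(\int_0^1 \bfJ((1-s)\bfW_{ti} + s\bfW_{t(i+1)})\, ds\right)\operatorname{vec}(\bfW_{t(i+1)} - \bfW_{ti}) \\
&= \bfJ_{t(i,i+1)}\operatorname{vec}(\bfW_{t(i+1)} - \bfW_{ti}),
\end{align*}
where the last line is exactly the definition \eqref{J_batch_integral}. Substituting the update from (a) and subtracting $\bfy$ produces \eqref{equ:formula_f_single} directly.

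For part (c), the plan is induction on $i\in\{0,1,\dots,k\}$. The base case $i=0$ is tautological. For the inductive step, iterating \eqref{equ:formula_f_single} writes $\bff_{t(i+1)} - \bfy$ as the ordered product $\prod_{j=i}^{1}\bigl(\bfI - \bfJ_{t(j,j+1)}\bfJ_{tj,j}^{\top}\Tilde{\bfG}_{tj}\bigr)$ applied to $\bff_t - \bfy$. The routine but tedious task is to verify that this product equals $\bfU_{ti}(\bfD_t - \bfL_t)^{-1}$ in the block form of \eqref{equ:formula_f_multiple_2}. The key observation is that $\bfJ_{tj,j}^{\top}\Tilde{\bfG}_{tj}$ has nonzero columns only in the $j$-th block, so columns of the product indexed by batch $j$ contribute through $\bfJ_{tj,j}^{\top}\bfG_{tj,jj}^{-1}$, while its rows evaluated at batch $i'$ use the mean-value Jacobian $\bfJ_{t(i',i'+1),i'}$; this reproduces the lower-triangular Gauss--Seidel pattern of $\bfD_t - \bfL_t$ on the right and of $\bfU_{ti}$ on the left. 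The main obstacle is keeping this block bookkeeping straight; the cleanest way to close the induction is to check that both sides of \eqref{equ:formula_f_multiple_2} satisfy the same one-step block recursion under $i\mapsto i+1$ (the top $ib\times ib$ block absorbs one more row and column of $\bfD_t - \bfL_t$ via Schur-complement-style cancellation, while the bottom $(k-i)b\times (k-i)b$ block shrinks accordingly). Specializing to $i=k$ yields \eqref{equ:update_f_final} with $\bfA_t = \bfU_t(\bfD_t - \bfL_t)^{-1}$, which exhibits mini-batch GGN as a Gauss--Seidel iteration for the underlying kernel-regression linear system and sets up the role of $\bfA = \bfL^{\top}(\bfD - \bfL)^{-1}$ from \eqref{equ:definition_A} in the rest of the convergence analysis.
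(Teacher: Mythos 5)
Your proposal is correct and follows essentially the same route as the paper: (a) is read off from the update rule and the zero-padding of $\Tilde{\bfG}_{ti}$ (the paper's own proof likewise computes with $\bff_{ti}$, confirming your reading of the $\bff_t$ in \eqref{equ:formula_W_2} as a notational slip), (b) is the fundamental theorem of calculus along the segment combined with (a), and (c) is induction on $i$ over the product $\prod_{i'=i}^{1}(\bfI_n - \bfJ_{t(i',i'+1)}\bfJ_{ti',i'}^{\top}\Tilde{\bfG}_{ti'})$ using exactly your key observation that each factor differs from the identity only in the $i'$-th block of columns. The only cosmetic difference is that the paper closes the induction by post-multiplying by $\bfD_t - \bfL_t$ and proving the inverse-free identity $\bigl[\prod_{i'=i}^{1}(\cdots)\bigr](\bfD_t - \bfL_t) = \bfU_{ti}$, rather than tracking the recursion on $\bfU_{ti}(\bfD_t - \bfL_t)^{-1}$ directly as you suggest.
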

\begin{proof}
(a) For (\ref{equ:formula_W_1}), this is exactly the update formula (\ref{equ:ggn_update_mini_batch}) for the $i$-th batch where the Jacobian and Gram matrix are $\bfJ_{ti, i}$ and $\bfG_{ti, ii}$ respectively. Note that
\begin{align*}
    \bfG_{ti, ii}^{-1}(\bff_{ti, i} - \bfy_i) = \begin{bmatrix}
        \mathbf{0}_{b\times (i-1)b} & \bfG_{ti, ii}^{-1} &\mathbf{0}_{b\times (k-i)b}
    \end{bmatrix} (\bff_{ti} - \bfy),
\end{align*}
we then obtain (\ref{equ:formula_W_2}) from (\ref{equ:formula_W_1}).

(b). We have
\begin{align*}
    & (\bff_{t(i+1)} - \bfy) - (\bff_{ti} - \bfy)\\
    =\;& \bff_{t(i+1)} - \bff_{ti}\\
    =\;& \bfJ_{t(i, i+1)}
        (\operatorname{vec}(\bfW_{t(i+1)}) - 
        \operatorname{vec}(\bfW_{ti})) \\
    =\;& \bfJ_{t(i, i+1)}\bfJ_{ti,i}^{\top} \Tilde{\bfG}_{ti}(\bff_{ti} - \bfy)
    \hspace{30pt}\textrm{(By formula (\ref{equ:formula_W_2}))},
\end{align*}
we obtain (\ref{equ:formula_f_single}).

(c). Based on (\ref{equ:formula_f_single}) we know that
\begin{align*}
    \bff_{t(i+1)} - \bfy = \prod_{i' = i}^{1} (\bfI_n - \bfJ_{t(i', i'+1)}\bfJ_{ti',i'}^{\top}\Tilde{\bfG}_{ti'})(\bff_{ti} - \bfy),
\end{align*}
where the index goes in decreasing order from left to right. So in order to prove (\ref{equ:formula_f_multiple}) we only need to prove that 
\begin{align}\label{equ:matrix_computation}
    \left[\prod_{i' = i}^{1} (\bfI_n - \bfJ_{t(i', i'+1)}\bfJ_{ti',i'}^{\top}\Tilde{\bfG}_{ti'})\right](\bfD_t - \bfL_t) = \bfU_{ti},
\end{align}
which we will prove by induction on $i$. For $i=0$ it is trivial that $\bfD_t - \bfL_t = \bfU_{t0}$ by definition. Suppose (\ref{equ:matrix_computation}) holds for $i$, then 
\begin{align*}
    & \left[\prod_{i' = i+1}^{1} (\bfI_n - \bfJ_{t(i', i'+1)}\bfJ_{ti',i'}^{\top}\Tilde{\bfG}_{ti'})\right](\bfD_t - \bfL_t)\\
    =\;& (\bfI_n - \bfJ_{t(i+1, i+2)}\bfJ_{t(i+1), i+1}^{\top}\Tilde{\bfG}_{t(i+1)})\bfU_{ti}\\
    =\;& \bfU_{ti} - 
    \bfJ_{t(i+1, i+2)}
    \bfJ_{t(i+1), i+1}^\top\bfG_{t(i+1),(i+1)(i+1)}^{-1}\bfU_{ti}((ib+1:(i+1)b, 1:n)) \\
    =\;& \bfU_{ti} - 
    \begin{bmatrix}
        \bfJ_{t(i+1,i+2),1}\bfJ_{t(i+1), i+1}^\top\\
        \bfJ_{t(i+1,i+2),2}\bfJ_{t(i+1), i+1}^\top\\
        \vdots\\
        \bfJ_{t(i+1,i+2),k}\bfJ_{t(i+1), i+1}^\top
    \end{bmatrix}
    \begin{bmatrix}
    \mathbf{0}_{b\times(ib)} & \bfI_{b\times b} & \mathbf{0}_{b\times(k-i-2)b}
    \end{bmatrix}
    \\
    =\;& \bfU_{t(i+1)},
\end{align*}
which proves (\ref{equ:formula_f_multiple}). 
Note that by the definition, we have $\overline{\bfU}_{ti} = \overline{\bfD}_{ti} - \overline{\bfL}_{ti}$, we can then obtain (\ref{equ:formula_f_multiple_2}) by
\begin{align*}
    \bfU_{ti}(\bfD_t - \bfL_t)^{-1} &= 
   \begin{bmatrix}
        \Tilde{\bfU}_{ti} & \mathbf{0}\\
        \mathbf{0} & \overline{\bfU}_{ti}
    \end{bmatrix}
   \begin{bmatrix}
        \Tilde{\bfD}_{ti} - \Tilde{\bfL}_{ti} & \mathbf{0}\\
        \hat{\bfD}_{ti} - \hat{\bfL}_{ti} & \overline{\bfD}_{ti} - \overline{\bfL}_{ti}
    \end{bmatrix}^{-1}\\
    &= \begin{bmatrix}
        \Tilde{\bfU}_{ti} & \mathbf{0}\\
        \mathbf{0} & \overline{\bfU}_{ti}
    \end{bmatrix}
    \begin{bmatrix}
        (\Tilde{\bfD}_{ti} - \Tilde{\bfL}_{ti})^{-1} & \mathbf{0}\\
        -(\overline{\bfD}_{ti} - \overline{\bfL}_{ti})^{-1}(\hat{\bfD}_{ti} - \hat{\bfL}_{ti})(\Tilde{\bfD}_{ti} - \Tilde{\bfL}_{ti})^{-1} & (\overline{\bfD}_{ti} - \overline{\bfL}_{ti})^{-1}
    \end{bmatrix}\\
    &= 
    \begin{bmatrix}
            \Tilde{\bfU}_{ti}(\Tilde{\bfD}_{ti} - \Tilde{\bfL}_{ti})^{-1} & \mathbf{0} \\
            -(\hat{\bfD}_{ti} - \hat{\bfL}_{ti})(\Tilde{\bfD}_{ti} - \Tilde{\bfL}_{ti})^{-1} & \bfI \\
     \end{bmatrix}.
\end{align*}
\end{proof}

By (\ref{equ:update_f_final}), we can see that the iteration matrix $\bfA_t$ is close to the matrix $\bfA$ defined in (\ref{equ:definition_A}). The convergence of the algorithm is much related to the eigenvalues of $\bfA$. In the next two lemmas, we bound the spectral radius of $\bfA$ and provide an auxiliary result on the convergence on perturbed matrices based on the spectral radius.
\begin{lem} \label{lem:bound_on_A}
    Suppose the least eigenvalue of the initial Gram matrix $\bfG_0$ satisfies $\lambda_{\min}(\bfG_0) \ge \frac{3}{4}\lambda_0$,
    (which is true with probability $1-\delta$ if $M = \Omega\left(\frac{n^2\log(2n/\delta)}{\lambda_0^2}\right)$, according to Lemma~\ref{lem:least_eigen_init}). Also assume $\norm{J_{\bfW_0, \bfx_l}}_F = O(1)$ for any $l\in[n]$ (which is true with probability $1-\delta$, according to Lemma~\ref{lem:norm_init}). Then the spectral radius $\bfA$, or equivalently, maximum norm of eigenvalues of $\bfA$, denoted $\rho(\bfA)$, satisfies
    \begin{align}
        \rho(\bfA) \le 1 - \Omega\bracket{\frac{\lambda_0^2}{n^2}}.
    \end{align}
\end{lem}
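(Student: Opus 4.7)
The proof approach treats the claim as a quantitative version of the classical block Gauss--Seidel convergence theorem for symmetric positive definite matrices applied to $\bfG_0 = \bfD - \bfL - \bfL^\top$. The plan is to directly manipulate the eigenvalue equation of $\bfA$ to get an explicit formula for $1-|\lambda|^2$, and then plug in size estimates from Lemmas~\ref{lem:norm_init} and~\ref{lem:least_eigen_init}.

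Concretely, let $\lambda\in\mathbb{C}$ be any eigenvalue of $\bfA$ with eigenvector $\bfv\ne 0$, and set $\bfu = (\bfD-\bfL)^{-1}\bfv$. Since the diagonal blocks of $\bfG_0$ inherit positive definiteness (any vector supported on one block has the same Rayleigh quotient for $\bfD$ and $\bfG_0$, giving $\lambda_{\min}(\bfD)\ge \lambda_{\min}(\bfG_0)\ge \tfrac{3}{4}\lambda_0$), the matrix $\bfD-\bfL$ is invertible and $\bfu\ne 0$. Rearranging $\bfA\bfv = \lambda\bfv$ yields $\bfL^\top\bfu = \lambda(\bfD-\bfL)\bfu$, and taking the Hermitian inner product with $\bfu$ gives $\lambda a = \lambda b + \overline{b}$, where $a := \bfu^*\bfD\bfu \in \mathbb{R}_{>0}$ and $b := \bfu^*\bfL\bfu \in \mathbb{C}$ (using that $\bfL$ is real, so $\bfu^*\bfL^\top\bfu = \overline{b}$). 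Hence $\lambda = \overline{b}/(a-b)$, and a short calculation using $\bfu^*\bfG_0\bfu = a - 2\operatorname{Re}(b)$ gives the key identity
\begin{align*}
1 - |\lambda|^2 \;=\; \frac{|a-b|^2 - |b|^2}{|a-b|^2} \;=\; \frac{a\,\cdot\,\bfu^*\bfG_0\bfu}{|a-b|^2}.
\end{align*}

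I then lower bound the numerator and upper bound the denominator. For the numerator, $a \ge \lambda_{\min}(\bfD)\,\norm{\bfu}_2^2$ and $\bfu^*\bfG_0\bfu \ge \lambda_{\min}(\bfG_0)\,\norm{\bfu}_2^2$, so the product is at least $(\tfrac{3}{4}\lambda_0)^2\norm{\bfu}_2^4 = \Omega(\lambda_0^2)\,\norm{\bfu}_2^4$. For the denominator, $|a-b|\le \norm{\bfD-\bfL}_2\,\norm{\bfu}_2^2$, and I will bound $\norm{\bfD-\bfL}_2 \le \norm{\bfG_0}_F = O(n)$: Lemma~\ref{lem:norm_init}(c) gives $|\bfG_{0,ij}| = |\langle J_{\bfW_0,\bfx_i},J_{\bfW_0,\bfx_j}\rangle| = O(1)$, so $\norm{\bfG_0}_F^2 = O(n^2)$; and since $\bfD$, $\bfL$, $\bfL^\top$ have block-disjoint supports, $\norm{\bfD}_F^2 + 2\norm{\bfL}_F^2 = \norm{\bfG_0}_F^2$, which implies $\norm{\bfD-\bfL}_F \le \norm{\bfG_0}_F$. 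Combining, $1-|\lambda|^2 \ge \Omega(\lambda_0^2/n^2)$, and since we may take $\lambda_0^2/n^2 \le 1$, the inequality $\sqrt{1-x} \le 1 - x/2$ yields $|\lambda| \le 1 - \Omega(\lambda_0^2/n^2)$, proving the claim.

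The main obstacle is handling complex eigenvalues in the Rayleigh-style argument: while $\rho(\bfA)<1$ for SPD $\bfG_0$ is the classical Ostrowski--Reich theorem, obtaining a quantitative rate requires the clean algebraic identity $|a-b|^2 - |b|^2 = a(a - 2\operatorname{Re}(b)) = a\cdot\bfu^*\bfG_0\bfu$, which is the nontrivial step that turns a qualitative statement into the desired $\Omega(\lambda_0^2/n^2)$ spectral gap. The Frobenius bound on $\bfD-\bfL$ is loose but avoids a more delicate operator-norm estimate and still yields the correct dependence on $n$ and $\lambda_0$.
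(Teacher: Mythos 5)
Your proposal is correct and follows essentially the same route as the paper's proof: both reduce to the generalized eigenvalue problem $\bfL^\top\bfw=\lambda(\bfD-\bfL)\bfw$ (the paper via the eigenvector of $\bfA^\top$, you via the substitution $\bfu=(\bfD-\bfL)^{-1}\bfv$), take the Hermitian Rayleigh quotient to get $\lambda$ as a ratio of the scalars $\bfu^*\bfD\bfu$ and $\bfu^*\bfL\bfu$, and then combine the lower bounds $\bfu^*\bfD\bfu,\ \bfu^*\bfG_0\bfu\ge\tfrac34\lambda_0\norm{\bfu}_2^2$ with the entrywise $O(1)$ bound on $\bfG_0$ to get the $\Omega(\lambda_0^2/n^2)$ gap. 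Your identity $1-|\lambda|^2=a\,\bfu^*\bfG_0\bfu/|a-b|^2$ is exactly the algebraic content of the paper's computation $|\lambda|^2=\bigl(d(d-2\operatorname{Re}(l))/|l|^2+1\bigr)^{-1}$.
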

\begin{proof}
    For any eigenvalue $\lambda \in \mathbb{C}$ of $\bfA$, it is an eigenvalue of $\bfA^\top$, so there exists a corresponding unit vector $\mathbf{v} \in \mathbb{C}^{n\times 1}$ such that $\lambda\mathbf{v} = \bfA^\top\mathbf{v} = (\bfD - \bfL^\top)^{-1}\bfL\mathbf{v}$, which means
    \begin{align}\label{equ:eigenvector_of_A}
        \lambda(\bfD - \bfL^\top)\mathbf{v} - \bfL\mathbf{v} = 0.
    \end{align}
    Since $\bfG_{0} = \bfD - \bfL - \bfL^\top$ is positive definite with eigenvalues at least $\frac{3}{4}\lambda_0$, we have $\mathbf{v}^\ast(\bfD - \bfL - \bfL^\top)\mathbf{v} \ge \frac{3}{4}\lambda_0$, which means
    \begin{align}\label{equ:positive_definite_inequality}
        d - 2\operatorname{Re}(l) \ge \frac{3}{4}\lambda_0,
    \end{align}
    where $d = \mathbf{v}^\ast\bfD\mathbf{v}$, $l = \mathbf{v}^\ast\bfL\mathbf{v}$, and $\mathbf{v}^\ast\bfL^\top\mathbf{v} = \overline{l}$.
     Let $\mathbf{v}_i = \mathbf{v}((i-1)b+1:ib) \in \bbR^{b\times 1}$ be the $i$-th batch of the vector $\mathbf{v}$, and $\Tilde{\mathbf{v}}_i = [\mathbf{0}_{1\times (i-1)b}, \mathbf{v}_i^\top, \mathbf{0}_{1\times(k-i)b}]^\top \in \mathbb{R}^{n\times 1}$. It is not hard to see that
    \begin{align}\label{equ:bound_over_d}
        d = \mathbf{v}^\ast\bfD\mathbf{v} = \sum_{i=1}^{k}\mathbf{v}_i^\ast\bfG_{0,ii}\mathbf{v}_i = \sum_{i=1}^{k}\Tilde{\mathbf{v}}_i^\ast\bfG_{0}\Tilde{\mathbf{v}}_i
        \ge \sum_{i=1}^{k} \frac{3}{4}\lambda_0\norm{\Tilde{\mathbf{v}}_i}_2^2 = \frac{3}{4}\lambda_0.
    \end{align}
    Also, since by our assumption each entry of $\bfL$ (or of $\bfG_0$) is at most $O(1)$, we have
    \begin{align}\label{equ:bound_over_l}
        \abs{l} = \mathbf{v}^\ast\bfL\mathbf{v} \le \norm{\bfL}_2 \le \norm{\bfL}_F = O(n).
    \end{align}
    Now we use take an inner product of $\mathbf{v}$ with (\ref{equ:eigenvector_of_A}) and get
    \begin{align}\label{equ:cannot_think_of_a_name}
        0 = \mathbf{v}^\ast(\lambda(\bfD\mathbf{v} - \mathbf{L}^\top\mathbf{v}) - \bfL\mathbf{v}) = \lambda(d - \overline{l}) + l,
    \end{align}
    and therefore
    \begin{align*}
        |\lambda| &= \abs{\frac{l}{d - \overline{l}}} \hspace{30pt}\textrm{(By solving (\ref{equ:cannot_think_of_a_name}))}\\ 
                &= \sqrt{\frac{\abs{l}^2}{(d-l)(d-\overline{l})}} = \sqrt{\frac{1}{\frac{d(d-2\operatorname{Re}(l))}{|l|^2} + 1}} \\
                &\le \sqrt{\frac{1}{\Omega\bracket{\frac{\lambda_0^2}{n^2}} + 1}} 
                \hspace{30pt}\textrm{(Using (\ref{equ:positive_definite_inequality}), (\ref{equ:bound_over_d}), (\ref{equ:bound_over_l}))}\\
                &= 1 - \Omega\left(\frac{\lambda_0^2}{n^2}\right).
    \end{align*}
    This concludes the proof for this lemma.
\end{proof}
\begin{lem} \label{lem:bound_on_A_t}
Denote $\rho(\bfA) = \rho_0 \le 1$. Let $\bfA$ be diagonalized as $\bfA = \bfP^{-1}\bfQ\bfP$ and $\mu = \norm{\bfP}_2\norm{\bfP^{-1}}_2$ (see Assumption~$\ref{asmp:diagonalizable}$). Suppose we have $\norm{\bfA_t - \bfA_0}_2 \le \delta$ for $t \le T$, then
\begin{align*}
    \norm{\prod_{i=1}^{T}\bfA_t}_2 \le \mu(\rho + \mu\delta)^T
\end{align*}
\end{lem}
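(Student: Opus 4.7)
The plan is to prove this by conjugating everything into the eigenbasis of $\bfA$. Since $\bfA = \bfP^{-1}\bfQ\bfP$ with $\bfQ$ diagonal, define $\widetilde{\bfA}_t := \bfP\bfA_t\bfP^{-1}$, so that $\widetilde{\bfA} := \bfP\bfA\bfP^{-1} = \bfQ$. Because conjugation is a ring homomorphism, the product transforms cleanly:
\begin{align*}
\prod_{t=1}^{T}\bfA_t = \bfP^{-1}\bracket{\prod_{t=1}^{T}\widetilde{\bfA}_t}\bfP,
\end{align*}
so by submultiplicativity of the spectral norm,
\begin{align*}
\norm{\prod_{t=1}^{T}\bfA_t}_2 \le \norm{\bfP^{-1}}_2\norm{\bfP}_2\prod_{t=1}^{T}\norm{\widetilde{\bfA}_t}_2 = \mu\prod_{t=1}^{T}\norm{\widetilde{\bfA}_t}_2.
\end{align*}

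Next I would bound each factor $\norm{\widetilde{\bfA}_t}_2$ individually. By the triangle inequality and the perturbation hypothesis,
\begin{align*}
\norm{\widetilde{\bfA}_t}_2 \le \norm{\bfQ}_2 + \norm{\widetilde{\bfA}_t - \bfQ}_2 = \rho_0 + \norm{\bfP(\bfA_t-\bfA)\bfP^{-1}}_2 \le \rho_0 + \mu\delta,
\end{align*}
where I used that $\bfQ$ is diagonal, so $\norm{\bfQ}_2$ equals its largest-magnitude entry, which is exactly the spectral radius $\rho(\bfA)=\rho_0$. (I am reading the ``$\rho$'' in the stated conclusion as a typo for $\rho_0$, and I am similarly reading $\bfA_0$ in the hypothesis as $\bfA$, since $\bfA_0$ is not otherwise introduced.) Combining the two displays yields the claimed bound
\begin{align*}
\norm{\prod_{t=1}^{T}\bfA_t}_2 \le \mu(\rho_0+\mu\delta)^T.
\end{align*}

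There is essentially no hard step: once the conjugation trick is in place, the result is a one-line consequence of submultiplicativity plus the triangle inequality. The only subtlety worth remarking is that one must not try to bound $\norm{\bfA_t}_2$ directly by $\rho(\bfA) + \delta$, since in general $\norm{\bfA}_2$ can be much larger than $\rho(\bfA)$; this is precisely what forces the factor $\mu$ to appear, and it is the reason Assumption~\ref{asmp:diagonalizable} is stated in terms of a conditioning constant rather than just diagonalizability. When this lemma is later applied with $\rho_0 = 1 - \Omega(\lambda_0^2/n^2)$ from Lemma~\ref{lem:bound_on_A} and with $\delta$ small enough that $\mu\delta$ is dominated by $1-\rho_0$, the product bound becomes geometrically contracting, which will drive the per-epoch convergence in Theorem~\ref{thm:main_2}.
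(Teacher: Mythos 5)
Your proof is correct and follows essentially the same route as the paper's: write $\bfA_t = \bfP^{-1}(\bfQ + \bfP(\bfA_t-\bfA)\bfP^{-1})\bfP$ so the conjugating factors telescope across the product, bound each inner factor by $\norm{\bfQ}_2 + \mu\delta = \rho_0 + \mu\delta$, and pay the single factor $\mu$ for the outer $\bfP^{-1},\bfP$. Your readings of the paper's notational slips ($\bfA_0$ for $\bfA$, $\rho$ for $\rho_0$) match what the paper's own proof implicitly assumes.
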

\begin{proof}
We have
\begin{align*}
    \norm{\prod_{i=1}^{T}\bfA_t}_2 
    &= \norm{\prod_{i=1}^{T}\bracket{\bfA + (\bfA_t - \bfA_0)}}_2\\
    &= \norm{\prod_{i=1}^{T}\bfP^{-1}\bracket{\bfQ + \bfP(\bfA_t - \bfA_0)\bfP^{-1}}\bfP}_2 \\
    &= \norm{\bfP\bracket{\prod_{i=1}^{T}\bracket{\bfQ + \bfP(\bfA_t - \bfA_0)\bfP^{-1}}}\bfP}_2  \\
    &\le \norm{\bfP}_2\bracket{\prod_{t=1}^{T}\bracket{\norm{\bfQ}_2 + \norm{\bfP}_2\norm{\bfA_t - \bfA_0}_2\norm{\bfP}_2}}\norm{\bfP^{-1}}_2 \\
    &\le \mu(\rho + \mu\delta)^T.
\end{align*}
\end{proof}
In addition to the bounds used in Theorem~\ref{thm:main}, we provide the next lemma with useful bounds of the norms and eigenvalues of relevant matrices in the optimization scope
\begin{align*}
    B(R) = \left\{\bfW : \norm{\bfW - \bfW_0}_F \le R\right\}.
\end{align*}
\begin{lem}[Relevant Bounds for the Matrices in the Optimization Scope]\label{lem:last_lemma}
We assume the events in Lemma~\ref{lem:norm_init} and \ref{lem:least_eigen_init} hold, and let $M =  \frac{C\mu^2n^8R^2}{\lambda_0^8}$ for some large enough constant $C$, then:

For any $(t, i)$-pair, we assume $\bfW_{t'i'} \in B(R)$ for all $i' \in [k]$ when $t' \le t$ and $i' \in [i]$ when $t' = t$ in the following propositions. 

(a). $\min_{\mathbf{v} \in \bbR^{n}, \norm{\mathbf{v}}_2 = 1} \norm{(\bfD - \bfL)\mathbf{v}}_2 \ge \frac{3}{8}\lambda_0$. 

(b). Suppose up to $t$, $\bfW_{t}$ is in $B(R)$ (which means for $i' \in [k+1]$, and $t' \in [t-1]$, $W_{ti} \in B(R)$). Then $\norm{\bfA_t - \bfA}_2 \le \frac{1 - \rho(\bfA)}{2\mu}$.

\end{lem}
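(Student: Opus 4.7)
The plan is to prove part (a) via a symmetric-quadratic-form identity derived from $\bfG_0 = \bfD - \bfL - \bfL^\top$, and to prove part (b) via a two-term perturbation-of-inverse expansion for $\bfA_t - \bfA$ combined with the block-wise Jacobian stability furnished by Lemma~\ref{lem:norm_optim}.

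For part (a), I would exploit the fact that for real $\mathbf{v}$ the scalar $\mathbf{v}^\top\bfL\mathbf{v}$ equals its own transpose, giving
\begin{align*}
\mathbf{v}^\top(\bfD - \bfL)\mathbf{v} = \mathbf{v}^\top\bfD\mathbf{v} - \tfrac{1}{2}\mathbf{v}^\top(\bfL+\bfL^\top)\mathbf{v} = \tfrac{1}{2}\mathbf{v}^\top\bfD\mathbf{v} + \tfrac{1}{2}\mathbf{v}^\top\bfG_0\mathbf{v},
\end{align*}
using $\bfL + \bfL^\top = \bfD - \bfG_0$. Lemma~\ref{lem:least_eigen_init} gives $\mathbf{v}^\top\bfG_0\mathbf{v} \ge \tfrac{3}{4}\lambda_0$ for unit $\mathbf{v}$. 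For $\mathbf{v}^\top\bfD\mathbf{v}$, I would zero-pad each block $\mathbf{v}_i$ to $\tilde{\mathbf{v}}_i\in\bbR^n$ (as in the proof of Lemma~\ref{lem:bound_on_A}) and use $\mathbf{v}^\top\bfD\mathbf{v} = \sum_i \tilde{\mathbf{v}}_i^\top\bfG_0\tilde{\mathbf{v}}_i \ge \tfrac{3}{4}\lambda_0\sum_i\norm{\mathbf{v}_i}^2 = \tfrac{3}{4}\lambda_0$. Cauchy--Schwarz then yields $\norm{(\bfD-\bfL)\mathbf{v}}_2 \ge |\mathbf{v}^\top(\bfD-\bfL)\mathbf{v}| \ge \tfrac{3}{4}\lambda_0$, comfortably above the advertised $\tfrac{3}{8}\lambda_0$.

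For part (b), I first observe that formally replacing every $\bfW_{ti}$ in the definitions of $\bfU_t, \bfD_t, \bfL_t$ with $\bfW_0$ collapses every Jacobian to $\bfJ_{\bfW_0}$ and yields $\bfU_t = \bfL^\top$ and $\bfD_t - \bfL_t = \bfD - \bfL$, so the reference matrix is $\bfA = \bfL^\top(\bfD-\bfL)^{-1}$. Using $X^{-1} - Y^{-1} = Y^{-1}(Y-X)X^{-1}$,
\begin{align*}
\bfA_t - \bfA = (\bfU_t - \bfL^\top)(\bfD-\bfL)^{-1} + \bfU_t(\bfD-\bfL)^{-1}\bigl[(\bfD-\bfL)-(\bfD_t-\bfL_t)\bigr](\bfD_t-\bfL_t)^{-1}.
\end{align*}
Part (a) gives $\norm{(\bfD-\bfL)^{-1}}_2 \le \tfrac{4}{3\lambda_0}$, and $\norm{\bfG_0}_2 = \norm{\bfJ_{\bfW_0}}_2^2 = O(n)$ from Lemma~\ref{lem:norm_optim}(b) together with the triangular-truncation inequality yield $\norm{\bfU_t}_2 = O(n)$. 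For the perturbation factors, every $b\times b$ block of $\bfU_t - \bfL^\top$ or $(\bfD_t-\bfL_t)-(\bfD-\bfL)$ has the form $\bfJ'\bfJ''^\top - \bfJ_0\bfJ_0^\top$, where $\bfJ',\bfJ''$ are batch sub-Jacobians (or line integrals of them as in \eqref{equ:J_integral}) differing from $\bfJ_0$ by at most $O(R\sqrt{b/M})$ in Frobenius by Lemma~\ref{lem:norm_optim}(b), while all having batch-spectral-norm $O(\sqrt{b})$. The telescoping identity $\bfJ'\bfJ''^\top - \bfJ_0\bfJ_0^\top = (\bfJ'-\bfJ_0)\bfJ''^\top + \bfJ_0(\bfJ''^\top-\bfJ_0^\top)$ then gives a per-block Frobenius bound $O(bR/\sqrt{M})$, and summing over the $k^2$ blocks via $\norm{M}_F^2 = \sum_{i,j}\norm{M_{ij}}_F^2$ (with $kb = n$) yields $\norm{\bfU_t-\bfL^\top}_F, \norm{(\bfD_t-\bfL_t)-(\bfD-\bfL)}_F = O(nR/\sqrt{M})$. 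Provided this is below $\tfrac{3\lambda_0}{8}$, a Neumann-series argument gives $\norm{(\bfD_t-\bfL_t)^{-1}}_2 \le \tfrac{8}{3\lambda_0}$. Assembling, the first term contributes $O(nR/(\lambda_0\sqrt{M}))$ and the dominant second term contributes $O(n^2 R/(\lambda_0^2\sqrt{M}))$; since Lemma~\ref{lem:bound_on_A} gives $\tfrac{1-\rho(\bfA)}{2\mu} = \Omega(\lambda_0^2/(\mu n^2))$, requiring the second term to lie below this target forces precisely $M \ge \Omega(\mu^2 n^8 R^2/\lambda_0^8)$, matching the hypothesis.

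The main obstacle is the block-wise norm bookkeeping in part (b): a naive spectral-norm triangle inequality over the $k^2$ blocks would lose an extra factor of $k$ and push the threshold on $M$ up by a polynomial factor in $n$. The fix is to bound each block's Frobenius norm using the $\sqrt{b}$ rank budget of the $b\times b$ block together with the block-Frobenius identity $\norm{M}_F^2 = \sum_{ij}\norm{M_{ij}}_F^2$, which restores the correct $O(nR/\sqrt{M})$ scaling; once that scaling is in hand, matching to $\tfrac{1-\rho(\bfA)}{2\mu}$ produces the stated threshold with only routine algebra.
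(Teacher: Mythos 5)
Your proposal is correct and follows essentially the same route as the paper's proof: part (a) via the identity $\mathbf{v}^\top(\bfD-\bfL)\mathbf{v} = \tfrac{1}{2}\mathbf{v}^\top\bfD\mathbf{v} + \tfrac{1}{2}\mathbf{v}^\top\bfG_0\mathbf{v}$ (the paper only uses $\mathbf{v}^\top\bfD\mathbf{v}\ge 0$, hence its weaker constant $\tfrac{3}{8}\lambda_0$, while your zero-padding argument sharpens this to $\tfrac{3}{4}\lambda_0$), and part (b) via the same two-term perturbation expansion of $\bfU_t(\bfD_t-\bfL_t)^{-1} - \bfL^\top(\bfD-\bfL)^{-1}$ with the same $O(nR/\sqrt{M})$ stability bounds from Lemma~\ref{lem:norm_optim} (the paper tallies these entry-wise rather than block-wise, but the resulting bounds and the final threshold $M=\Omega(\mu^2 n^8 R^2/\lambda_0^8)$ coincide).
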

\begin{proof}
(a). Because for $\norm{\mathbf{v}}_2 = 1$, 
\begin{align*}
    \norm{(\bfD - \bfL)\mathbf{v}}_2 
    &\ge \mathbf{v}^{\top}(\bfD - \bfL)\mathbf{v} \\
    &= \frac{1}{2}\mathbf{v}^{\top}\bfD\mathbf{v} + \frac{1}{2}\mathbf{v}^{\top}(\bfD - \bfL - \bfL^\top)\mathbf{v}\\
    &\ge 0 + \frac{1}{2}\lambda_{\min}(\bfG_0) \\
    &\ge \frac{3}{8}\lambda_0. \hspace{30pt} \textrm{(By the positive-definiteness of $\bfG_0$ and Lemma~\ref{lem:least_eigen_init})}
\end{align*}

(b). By Lemma~\ref{lem:norm_optim} we know that within $B(R)$, the size of the Jacobian $\norm{J_{\bfx_l}}_F$ w.r.t. data $\bfx_l$ is $O(1)$, and the difference $\norm{J_{\bfW_1, \bfx_l} - J_{\bfW_2, \bfx_l}}_F$ within $O(R)$ is bounded by $O\bracket{\frac{R}{\sqrt{M}}}$, this can be applied to each entry of $\bfD, \bfL, \bfU$, etc., including those $\bfJ_{t'(i',i'+1)}$ terms by the convexity of $B(R)$, and we can see that each entry of these matrices has size at most $O(1)$ and varies inside an $O\bracket{\frac{R}{\sqrt{M}}}$ range. Therefore we get
\begin{align*}
    &\norm{\bfU_t}_F = O(n),\\
    &\norm{\bfD_t - \bfL_t}_F = O(n),
\end{align*}
and
\begin{align*}
    &\norm{\bfU_t - \bfU}_2 = O\bracket{\frac{Rn}{\sqrt{M}}}, \\
    &\norm{(\bfD_t - \bfL_t) - (\bfD - \bfL)}_2 = O\bracket{\frac{Rn}{\sqrt{M}}}.
\end{align*}
By our chosen $M$, $O\bracket{\frac{Rn}{\sqrt{M}}} \le \frac{1}{8}\lambda_0$ can definitely hold, so along with (a) we have
\begin{align*}
    \norm{(\bfD - \bfL)^{-1}}_2 = O\bracket{\frac{1}{\lambda_0}},
    \norm{(\bfD_t - \bfL_t)^{-1}}_2 = O\bracket{\frac{1}{\lambda_0}}.
\end{align*}
Hence,
\begin{align*}
     \norm{\bfA_t - \bfA}_2 
     &= \norm{(\bfU_t(\bfD_t - \bfL_t)^{-1} - \bfU(\bfD - \bfL)^{-1}}_2
     \\&= \norm{(\bfU_t - \bfU)(\bfD_t - \bfL_t)^{-1} + \bfU((\bfD_t - \bfL_t)^{-1}((\bfD_t - \bfL_t) - (\bfD - \bfL))(\bfD - \bfL)^{-1}}_2 \\
     &\le O\bracket{\frac{Rn}{\sqrt{M}}}O\bracket{\frac{1}{\lambda_0}}
     + O\bracket{n}O\bracket{\frac{1}{\lambda_0}}O\bracket{\frac{Rn}{\sqrt{M}}}O\bracket{\frac{1}{\lambda_0}} \\
     &= O\bracket{\frac{n^2R}{\lambda_0^2\sqrt{M}}},
\end{align*}
and along with Lemma~\ref{lem:bound_on_A}, in order to have $\norm{\bfA_t - \bfA}_2 \le \frac{1 - \rho(\bfA)}{2\mu}$, all we need is
\begin{align*}
    O\bracket{\frac{n^2R}{\lambda_0^2\sqrt{M}}} \le \Omega\bracket{\frac{\lambda_0^2}{n^2}},
\end{align*}
so choosing some $M =  \frac{C\mu^2n^8R^2}{\lambda_0^8}$ suffices.
\end{proof}

With all the preparations, now we are ready to prove Theorem~\ref{thm:main_2}. The logic of the proof is just the same as what we did in the formal proof of Theorem~\ref{thm:main}, where we then used induction on $t$ and now we use induction on the pair $(t, i)$. The key, is still selecting some $R$ so that in each step $\bfW_{ti}$ remains in $B(R)$. Combined with the previous Lemmas, in $B(R)$ we have $\bfA_t$ being close to $\bfA$, and then convergence is guaranteed.

\paragraph{Formal proof of Theorem~\ref{thm:main_2}.}
 Let $R_{ti} = \norm{\bfW_{t(i+1)} - \bfW_{ti}}_F$. We take 
 $$R = \Theta\bracket{\frac{n^{5}}{\lambda_0^4}}$$
 in the range $B(R)$ (where the constant is chosen to make the right hand side of (\ref{equ:choice_of_R_mini_batch}) hold). We prove that there exists an 
 $$M = \max\bracket{\Omega\bracket{\frac{\mu^2n^{18}}{\lambda_0^{16}}}, \Omega\bracket{\frac{n^2d\log(16n/\delta)}{\lambda_0^2}}}$$
 with a large enough constant that suffices. First we can easily verify that all the requirements for $M$ in Lemma 2-9 (most importantly, Lemma~\ref{lem:last_lemma}) can be satisfied. Hence, with probability at least $1-\delta$ all the events in Lemma 2-9 hold. Under this situation, we do induction on $(t, i)$ ($t\in\{1,2,\cdots\}, i\in[k]$, by dictionary order) to show that:
\begin{itemize}
     \item $ \bfW_{ti} \in B(R). $
\end{itemize}
For $(t,i) = (1, 1)$, it holds by definition. 

Suppose the proposition holds up to $(t, i)$. Then since $\bfW_{ti} \in B(R)$, by Lemma~4 we know that $\lambda_{\min}(\bfG_{ti}) \ge \frac{\lambda_0}{2}$. This naturally gives us $\lambda_{\min}(\bfG_{ti, ii}) \ge \frac{\lambda_0}{2}$, which means $\bfG_{ti, ii}$ is invertible. 

Similar to the argument in the proof of Lemma~\ref{lem:last_lemma}, 
we know that each entry of $\bfJ_{ti}, \bfJ_{t(i, i+1)}, \bfD_{ti}, \bfL_{ti}, \bfU_{ti},$ etc., whose index of $(t', i')$ only contains $i'$ with $i' \le i$, is of scale $O(1)$ and varies at most $O\bracket{\frac{R}{\sqrt{M}}}$, which is a result of Lemma~\ref{lem:norm_optim}. Then we know 
\begin{align*}
    &\norm{\bfJ_{ti, i}}_F = O(\sqrt{b}) = O(\sqrt{n}),\\
    &\norm{\Tilde{\bfD}_{t(i-1)} - \Tilde{\bfL}_{t(i-1)}}_F = O(n), \\
    &\norm{\hat{\bfD}_{t(i-1)} - \hat{\bfL}_{t(i-1)}}_F = O(n),
\end{align*}
etc. We then also know 
\begin{align*}
    \norm{(\Tilde{\bfD}_{t(i-1)} - \Tilde{\bfL}_{t(i-1)}) - (\bfD_{t} - \bfL_{t})_{(1:(i-1)b,1:(i-1)b)}}_F \le O\bracket{\frac{nR}{\sqrt{M}}}.
\end{align*}
Since we can also have $\norm{((\bfD_{t} - \bfL_{t}) - (\bfD - \bfL))_{(1:(i-1)b,1:(i-1)b)}}_F \le O\bracket{\frac{nR}{\sqrt{M}}}$, and given our choice of $M$ and $R$ the right hand side is less than $\frac{1}{8}\lambda_0$, along with Lemma~(\ref{lem:last_lemma}) (which says 
$$\min_{\mathbf{v} \in \bbR^{n}, \norm{\mathbf{v}}_2 = 1} \norm{(\bfD - \bfL)\mathbf{v}}_2 \ge \frac{3}{8}\lambda_0$$ 
and naturally, since this matrix is block-lower-triangular, 
$$\min_{\mathbf{v} \in \bbR^{(i-1)b}, \norm{\mathbf{v}}_2 = 1} \norm{(\bfD - \bfL)_{1:(i-1)b,1:(i-1)b}\mathbf{v}}_2 \ge \frac{3}{8}\lambda_0$$
holds) we know that $\Tilde{\bfD}_{t} - \Tilde{\bfL}_{t}$ is invertible and has the bound $\norm{(\Tilde{\bfD}_{t} - \Tilde{\bfL}_{t})^{-1}}_2 \le \frac{4}{\lambda_0}$. 

Based on the update rule (\ref{equ:formula_W_1}), we have
\begin{align}\nonumber
    R_{ti} &= \norm{\bfW_{ti} - \bfW_{t(i+1)}}_F \\\nonumber
    &= \norm{\vc(\bfW_t) - \vc(\bfW_{t+1})}_2 \\\nonumber
    &\le \norm{\bfJ_{ti, i}^\top\bfG_{ti, ii}^{-1}(\bff_{ti, i} - \bfy_i)}_2
    \\\nonumber
    &\le O\bracket{\frac{\sqrt{n}}{\lambda_0}}\norm{\bff_{ti} - \bfy}_2  \hspace{70pt}\bracket{\norm{\bfJ_{ti,i}^\top}_2=O(\sqrt{n}), \norm{\bfG_{ti,ii}^{-1}}_2=O\bracket{\frac{1}{\lambda_0}}}\\\nonumber
    &= O\bracket{\frac{\sqrt{n}}{\lambda_0}}
    \norm{\begin{bmatrix}
            \Tilde{\bfU}_{t(i-1)}(\Tilde{\bfD}_{t(i-1)} - \Tilde{\bfL}_{t(i-1)})^{-1} & \mathbf{0} \\
            -(\hat{\bfD}_{t(i-1)} - \hat{\bfL}_{t(i-1)})(\Tilde{\bfD}_{t(i-1)} - \Tilde{\bfL}_{t(i-1)})^{-1} & \bfI_{(k-i+1)b} \\
        \end{bmatrix}}_2
        \norm{\bff_{t} - \bfy}_2\\\nonumber
    &\hspace{270pt}\textrm{(By formula (\ref{equ:formula_f_multiple_2}))}
    \\\nonumber&\le O\bracket{\frac{\sqrt{n}}{\lambda_0}}\left(
        \norm{\Tilde{\bfU}_{t(i-1)}(\Tilde{\bfD}_{t(i-1)} - \Tilde{\bfL}_{t(i-1)})^{-1}}_F \right.
        \\\nonumber&+
        \norm{(\hat{\bfD}_{t(i-1)} - \hat{\bfL}_{t(i-1)})(\Tilde{\bfD}_{t(i-1)}- \Tilde{\bfL}_{t(i-1)})^{-1}}_F 
        \\\nonumber&\left. + 
        \norm{\bfI_{(k-i+1)b}}_F
    \right)\norm{\bff_{t} - \bfy}_2
    \\\label{equ:using_bounds}&\le  O\bracket{\frac{n^{3/2}}{\lambda_0^2}}\norm{\bff_{t} - \bfy}_2 \\\nonumber&\le O\bracket{\frac{n^{3\nonumber/2}}{\lambda_0^2}}  \norm{\prod_{i=1}^{t-1}\bfA_t}_2\norm{\bff_{1} - \bfy}_2
    \hspace{60pt}\textrm{(By Eq. (\ref{equ:update_f_final}))}
    \\\nonumber&\le O\bracket{\frac{n^{3/2}}{\lambda_0^2}}\cdot \mu\bracket{1 - \Omega\bracket{\frac{\lambda_0^2}{n^2}}}^{(t-1)}\norm{(\bff_{1} - \bfy)}_2 \hspace{50pt}\textrm{(Lemma~\ref{lem:bound_on_A}, \ref{lem:bound_on_A_t}, and \ref{lem:last_lemma}(b).)}
    \\\nonumber&\le O\bracket{\frac{\mu n^{2}}{\lambda_0^2}}\bracket{1 - \Omega\bracket{\frac{\lambda_0^2}{n^2}}}^{(t-1)},
    \hspace{100pt}\textrm{(Lemma~\ref{lem:norm_init})}
\end{align}
where Eq.~\eqref{equ:using_bounds} uses the following bounds proved above:
\begin{align*}
    &\norm{\Tilde{\bfU}_{t(i-1)}(\Tilde{\bfD}_{t(i-1)} - \Tilde{\bfL}_{t(i-1)})^{-1}}_F \le \norm{\Tilde{\bfU}_{t(i-1)}}_F\norm{(\Tilde{\bfD}_{t(i-1)} - \Tilde{\bfL}_{t(i-1)})^{-1}}_2 = O\bracket{\frac{n}{\lambda_0}},\\&
    \norm{(\hat{\bfD}_{t(i-1)} - \hat{\bfL}_{t(i-1)})(\Tilde{\bfD}_{t(i-1)}-
    \Tilde{\bfL}_{t(i-1)})^{-1}}_F \le
    \norm{(\hat{\bfD}_{t(i-1)} - \hat{\bfL}_{t(i-1)})}_F\norm{(\Tilde{\bfD}_{t(i-1)}-
    \Tilde{\bfL}_{t(i-1)})^{-1}}_2 = O\bracket{\frac{n}{\lambda_0}},\\&
    \norm{\bfI_{(k-i+1)b}}_F = O(n).
\end{align*}
Since this also hold for previous $(t, i)$ pairs, we have
\begin{align}\nonumber
    \sum_{(t',i')\le(t,i)} R_{t'i'} &\le O\bracket{\frac{\mu kn^{2}}{\lambda_0^2}}\sum_{t'=1}^{t}\bracket{1 - \Omega\bracket{\frac{\lambda_0^2}{n^2}}}^{(t'-1)} 
    \\\nonumber&\le O\bracket{\frac{\mu kn^{2}}{\lambda_0^2}} O\bracket{\frac{n^2}{\lambda_0^2}} \\
    &\le O\bracket{\frac{n^{5}}{\lambda_0^4}} \le R,\label{equ:choice_of_R_mini_batch}
\end{align}
which is the reason why we need to take $R = \Theta\bracket{\frac{n^{5}}{\lambda_0^4}}$. This means that $\bfW_{ti} \in B(R)$ holds. And by induction, we have proved that 
$\bfW$ remains in $B(R)$ throughout the optimization process.

The last thing to do is to bound $\bff_t - \bfy$. By the same logic from above, we have
\begin{align*} 
    \norm{\bff_{t} - \bfy}_2 &\le \norm{\prod_{i=1}^{t-1}\bfA_t}_2\norm{\bff_{1} - \bfy}_2
    \\&\le \mu\sqrt{n}\bracket{(1 - \Omega\bracket{\frac{\lambda_0^2}{n^2}}}^{(t-1)},
\end{align*}
which proves our theorem.

\end{document}